\newcommand{\pluseq}{\mathrel{+}=}
\newcommand{\method}[0]{RR}
\newcommand{\accmethod}[0]{RR}
\newcommand{\methodlong}[0]{Ridge Rider}
\newtheorem{theorem}{Theorem}
\newtheorem{definition}{Definition}
\newtheorem{proposition}{Proposition}
\newtheorem{lemma}{Lemma}
\newcommand{\beq}{\begin{equation}}
\newcommand{\eeq}{\end{equation}}
\newcommand{\beqs}{\begin{equation*}}
\newcommand{\eeqs}{\end{equation*}}
\newcommand{\R}{\mathbb{R}}
\newcommand{\norm}[1]{\lVert #1 \rVert}
\newcommand{\del}{\delta}
\renewcommand{\th}{\theta}
\newcommand{\la}{\lambda}
\newcounter{theo}[section] \setcounter{theo}{0}
\renewcommand{\thetheo}{\arabic{section}.\arabic{theo}}
\newcounter{lem}[section] \setcounter{lem}{0}
\renewcommand{\thelem}{\arabic{section}.\arabic{lem}}
\newcounter{prf}[section]\setcounter{prf}{0}
\renewcommand{\theprf}{\arabic{section}.\arabic{prf}}
\newcounter{cor}[section]\setcounter{cor}{0}
\renewcommand{\thecor}{\arabic{section}.\arabic{cor}}
\newcounter{prop}[section] \setcounter{prop}{0}
\renewcommand{\thelem}{\arabic{section}.\arabic{prop}}
\renewcommand{\algorithmiccomment}[1]{\bgroup\hfill//~#1\egroup}
\newcommand\blfootnote[1]{%
  \begingroup
  \renewcommand\thefootnote{}\footnote{#1}%
  \addtocounter{footnote}{-1}%
  \endgroup
}
\title{Ridge Rider: Finding Diverse Solutions by Following Eigenvectors of the Hessian}
\author{%
  Jack Parker-Holder$^{\ast}$ \\
   University of Oxford 
   \And
   Luke Metz \\
   Google Research, Brain Team 
   \And
   Cinjon Resnick \\
   NYU 
   \And
   Hengyuan Hu \\
   FAIR
   \And
   Adam Lerer \\
   FAIR
   \And
   Alistair Letcher 
   \And
   Alex Peysakhovich \\
   FAIR
   \And
   Aldo Pacchiano \\
   BAIR
   \And
   Jakob Foerster$^{\ast}$\\
   FAIR
}
\begin{document}

\maketitle

\blfootnote{$^\ast$Equal contribution. Correspondence to \href{mailto:jackph@robots.ox.ac.uk}{jackph@robots.ox.ac.uk}, jnf@fb.com}

\begin{abstract}

Over the last decade, a single algorithm has changed many facets of our lives - Stochastic Gradient Descent (SGD). In the era of ever decreasing loss functions, SGD and its various offspring have become the go-to optimization tool in machine learning and are a key component of the success of deep neural networks (DNNs). 
While SGD is guaranteed to converge to a local optimum (under loose assumptions), in some cases it may matter \emph{which} local optimum is found, and this is often context-dependent. Examples frequently arise in machine learning, from shape-versus-texture-features to ensemble methods and zero-shot coordination. 
In these settings, there are desired solutions which SGD on `standard' loss functions will not find, since it instead converges to the `easy' solutions. 
In this paper, we present a different approach. Rather than following the gradient, which corresponds to a locally greedy direction, we instead follow the eigenvectors of the Hessian, which we call ``ridges''. 
By iteratively following and branching amongst the ridges, we effectively span the loss surface to find qualitatively different solutions.
We show both theoretically and experimentally that our method, called \emph{\methodlong{}} (\accmethod), offers a promising direction for a variety of challenging problems. 




\end{abstract}

\section{Introduction}
\label{sec:intro}
Deep Neural Networks (DNNs) are extremely popular in many applications of machine learning ranging from vision \citep{resnet, lenet} to reinforcement learning \citep{silver}. Optimizing them is a non-convex problem and so the use of gradient methods (e.g. stochastic gradient descent, SGD) leads to finding local minima. While recent evidence suggests \citep{choromanska2015loss} that in supervised problems these local minima obtain loss values close to the global minimum of the loss, there are a number of problem settings where optima with the same value can have very different properties. For example, in Reinforcement Learning (RL), two very different policies might obtain the same reward on a given task, but one of them might be more robust to perturbations. Similarility, it is known that in supervised settings some minima generalize far better than others \citep{DBLP:journals/corr/KeskarMNST16, flatminima}. Thus, being able to find a specific \emph{type} or class of minimum is an important problem.

At this point it is natural to ask what the benefit of finding diverse solutions is? Why not optimize the property we care about directly? The answer is \emph{Goodhart's law}: ``When a measure becomes a target, it ceases to be a good measure.'' \citep{strathern1997improving}. Generalization and zero-shot coordination are two examples of these type of objectives, whose very definition prohibits direct optimization.

To provide a specific example, in computer vision it has been shown that solutions which use shape features are known to generalize much better than those relying on textures \citep{DBLP:journals/corr/abs-1811-12231}. However, they are also more difficult to find \citep{Geirhos2020a}. In reinforcement learning (RL), recent work focuses on constructing agents that can coordinate with humans in the cooperative card game Hanabi \citep{DBLP:journals/corr/abs-1902-00506}. Agents trained with self-play find easy to learn, but highly arbitrary, strategies which are impossible to play with for a novel partner (including human). To avoid these undesirable minima previous methods need access to the symmetries of the problem to make them inaccessible during training. The resulting agents can then coordinate with novel partners, including humans \citep{hu2020other}. Importantly, in both of these two cases, standard SGD-based methods do not find these `good' minima easily and problem-specific hand tuning is required by designers to prevent SGD from converging to `bad' minima. 

Our primary contribution is to take a step towards addressing such issues in a general way that is applicable across modalities. One might imagine a plausible approach to finding different minima of the loss landscape would be to initialize gradient descent near a saddle point of the loss in multiple replicates, and hope that it descends the loss surface in different directions of negative curvature. Unfortunately, from any position near a saddle point, gradient descent will always curve towards the direction of most negative curvature (see Appendix \ref{app:gd_dynamics}), and there may be many symmetric directions of high curvature.

\begin{wrapfigure}{r}{0.43\textwidth} 
\label{fig:hat_motivation}
\vspace{-3mm}
    \centering
    \includegraphics[width=0.42\textwidth]{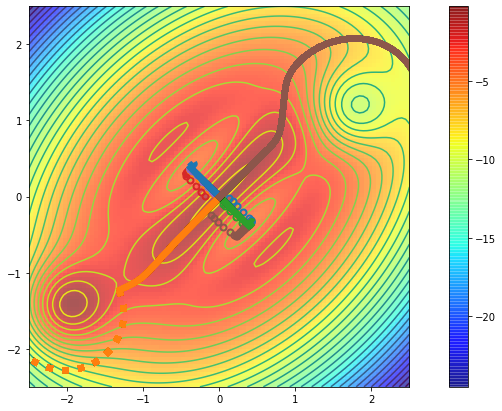}    
    \vspace{-3mm}
    \caption{\small Comparison of gradient descent (GD, hollow circles) and \method{} (\accmethod{}, solid circles) on a two-dimensional loss surface. GD starting near the origin only finds the two local minima whose basin have large gradient near the origin. \accmethod{} starts at the maximum and explores along four paths based on the two eigenvectors of $\mathcal{L}$. Two paths (blue and green) find the local minima while the other two explore the lower-curvature ridge and find global minima. Following the eigenvectors leads \accmethod{} around a local minimum (brown), while causing it to halt at a local maximum (orange) where either a second ride (dotted) or GD may find a minimum.
    }
    \vspace{-6mm}
    \label{fig:rr_path}
\end{wrapfigure}


Instead, we start at a saddle point and \textit{force} different replicates to follow distinct, orthogonal directions of negative curvature by iteratively following each of the eigenvectors of the Hessian until we can no longer reduce the loss, at which point we repeat the branching process. Repeating this process hopefully leads the replicates to minima corresponding to distinct convex subspaces of the parameter space, essentially converting an optimization problem into search \cite{branchandbound}. We refer to our procedure as \methodlong{} (\accmethod{}). \accmethod{} is less `greedy' than standard SGD methods, with Empirical Risk Minimization (ERM, \citep{erm_vapnik}), and thus can be used in a variety of situations to find diverse minima. This greediness in SGD stems from it following the path with highest expected local reduction in the loss. Consequently, some minima, which might actually represent the solutions we seek, are never found.


In the next section we introduce notation and formalize the problem motivation. In Section \ref{sec:method} we introduce \accmethod{}, both as an exact method, and as an approximate scalable algorithm. In both cases we underpin our approach with theoretical guarantees. Finally, we present extensions of \accmethod{} which are able to solve a variety of challenging machine learning problems, such as finding diverse solutions reinforcement learning, learning successful strategies for zero-shot coordination and generalizing to out of distribution data in supervised learning. We test \accmethod{} in each of these settings in Section \ref{sec:experiments}. Our results suggest a conceptual connection between these previously unrelated problems.
\vspace{-3mm}


\section{Background}
\label{sec:background}
Throughout this paper we assume a smooth, \emph{i.e.},  infinitely differentiable, loss function $\mathcal{L}_{\theta} =\mathbb{E}_{\mathbf{x}_i} L_{\theta}(\mathbf{x}_i)$, where $\theta \in  \mathbb{R}^n = \Theta$ are typically the weights of a DNN.
In the supervised setting, $\mathbf{x}_i = \{x_i, y_i\}$ is an input and label for a training data point, while $L_{\theta}(\mathbf{x}_i)$ could be the cross-entropy between the true label $y_i$ and the prediction $f_\theta(x_i)$.

We use $\nabla_\theta \mathcal{L}$ for the gradient and $\mathcal{H}$ for the Hessian, ie. $\mathcal{H} = \nabla^2_\theta \mathcal{L}$.
The eigenvalues (EVals) and eigenvectors (EVecs) of $\mathcal{H}$ are $\lambda_i$ and $e_i$ respectively. The computation of the full Hessian is prohibitively expensive for all but the smallest models, so we assume that an automatic differentiation library is available from which vector-Hessian products, $\mathcal{H} v$, can be computed efficiently~\citep{pearlmutter1994fast}: $\mathcal{H} v = \nabla_\theta \big( ( \nabla_\theta  \mathcal{L}) v \big).$

\paragraph{Symmetry and Equivalence}
Real world problems commonly contain symmetries and invariances. For example, in coordination games, the payout is unchanged if all players jointly update their strategy to another equilibrium with the same payout. We formalize this as follows: A symmetry, $\phi$, of the loss function is a bijection on the parameter space such that $\mathcal{L}_\theta = \mathcal{L}_{\phi(\theta)},  \text{ for all } \theta \in \Theta$.

If there are $N$ non-overlapping sets of $m$ parameters each, $k_i = \{\theta_{k_i^1}, .. \theta_{k_i^m}\}, i \in \{1, ... N\}$ and the loss function is invariant under all permutations of these $N$ sets, we call these sets `N-equivalent'. In a card-game which is invariant to color/suit, this corresponds to permuting both the color-dependent part of the input layer and the output layer of the DNN simultaneously for all players.



\paragraph{Zero-Shot Coordination} The goal in Zero-Shot Coordination is to coordinate with a stranger on a fully cooperative task during a single try. Both the problem setting and the task are common knowledge. A zero-shot coordination scheme (or learning rule) should find a policy that obtains high average reward when paired with the distribution of policies obtained independently but under the same decision scheme. All parties can agree on a scheme beforehand, but then have to obtain the policy independently when exposed to the environment. A key challenge is that problems often have symmetries, which can generate a large set of equivalent but mutually incompatible policies.

We assume we have a fully observed MDP with states $s_t \in \mathcal{S}$ and agents $\pi_{i \in [1, N]}$, each of whom chooses actions $a_t^i \in \mathcal{A}$ at each step. The game is cooperative with agents sharing the reward $r_t$ conditioned on the joint action and state, and the goal is to maximize expected discounted return $J = \mathbb{E}_\tau \sum_t \gamma^{t}r_t$, where $\gamma$ is the discount factor and $\tau$ is the trajectory.

\paragraph{Out of Distribution Generalization (OOD)}
\label{subsection:oodg}
We assume a multi-environment setting where our goal is to find parameters $\theta$ that perform well on all $N$ environments given a set $m < N$ of training environments $\Xi = \{\xi_0, \xi_1, \ldots, \xi_m\}$. The loss function $\ell$ and the domain and ranges are fixed across environments. Each environment $\xi$ has an associated dataset $D_\xi$ and data distribution $\mathcal{D}_\xi$. 
Together with our global risk function, $l$, these induce a per environment loss function,
\begin{align*}
    \mathcal{L}_{\xi}(\theta) = \mathbb{E}_{\mathbf{x}_i \sim \mathcal{D}_\xi} \ell(\mathbf{x}_i).
\end{align*}
Empirical Risk Minimization (ERM) ignores the environment and minimizes the average loss across all training examples, 
\begin{align*}
    \mathcal{L}_{ERM}(\theta) = \mathbb{E}_{\mathbf{x}_i \sim \cup_{\xi \in \Xi} \mathcal{D}_\xi} \ell_{\xi}(\mathbf{x}_i).
\end{align*}

ERM can fail when the test distribution differs from the train distribution.
\section{Method}
\label{sec:method}

We describe the intuition behind \accmethod{} and present both exact and approximate (scalable) algorithms. We also introduce extensions to zero-shot coordination in multi-agent settings and out of distribution generalization.

The goal of \accmethod{} is as follows. Given a smooth loss function, $\mathcal{L}(\theta)$, discover qualitatively different local minima, $\theta^*$, while grouping together those that are equivalent (according to the definition provided in Section~\ref{sec:background}) or, alternatively, only exploring one of each of these equivalent policies. 

While symmetries are  problem specific, in general \emph{equivalent parameter sets} (see Section~\ref{sec:background}) lead to repeated EVals. If a given loss function in $\mathbb{R}^n$ has $N$-equivalent parameter sets  ($N > 2$) of size $m$ and a given $\theta$ is invariant under all the associated permutations, then the Hessian at $\theta$ has at most $n-m(N-2)$ distinct eigenvalues  (proof in Appendix~\ref{subsection::repeated_eigenvalues}). Thus, rather than having to explore up to $n$ orthogonal directions, in some settings it is sufficient to explore one member of each of the groups of  distinct EVals to obtain different  non-symmetry-equivalent solutions. Further, the EVecs can be ordered by the corresponding EVal, providing a numbering scheme for the classes of solution.

To ensure that there is at least one negative EVal and that all negative curvature directions are locally loss-reducing, we start at or near a strict saddle point $\theta^{\text{MIS}}$.  
For example, in supervised settings, we can accomplish this by initializing the network near zero. 
Since this starting point combines small gradients and invariance, we refer to it as the Maximally Invariant Saddle (MIS). Formally, 
\begin{align*}
    \theta^{\text{MIS}} = \arg\min_{\theta} |\nabla_{\theta} J(\theta)|, \text{ s.t. } \phi(\theta ) = \theta, \text{  } \forall \phi,
\vspace{-3mm}
\end{align*}
for all symmetry maps $\phi$ as defined in Section~\ref{sec:background}.

In tabular RL problems the MIS can be obtained by optimizing the following objective (Proof in Appendix~\ref{subsection::MIS}): 
\begin{align*}
\vspace{-5mm}
    \theta^{\text{MIS}}= \arg\min_{\theta}  |\nabla_{\theta} J(\theta)|  - \lambda H(\pi_\theta(\mathbf{a})), \lambda > 0
\end{align*}
\vspace{-5mm}


From $\theta^{\text{MIS}}$, \accmethod{} proceeds as follows: We branch to create replicas, which are each updated in the direction of a different EVec of the Hessian (which we refer to as `ridges'). While this is locally loss reducing, a single step step typically does not solve a problem. Therefore, at all future timesteps, rather than choosing a new EVec, each replicate follows the updated version of its original ridge until a break-condition is met, at which point the branching process is repeated.  For any ridge this procedure is repeated until a locally convex region without any negative curvature is found, at which point gradient descent could be performed to find a local optimum or saddle. In the latter case, we can in principle apply \accmethod{} again starting at the saddle (although we did not find this to be necessary in our setting). We note that \accmethod{} effectively turns a continuous optimization problem, over $\theta$, into a discrete search process, \emph{i.e.}, which ridge to explore in what order.

During \accmethod{} we keep track of a \emph{fingerprint}, $\Psi$, containing the indices of the EVecs chosen at each of the preceding branching points. $\Psi$ uniquely describes $\theta^\Psi$ up to repeated EVals. In Algorithm \ref{algorithm:diversityrr} we show pseudo code for RR, the functions it takes as input are described in the next paragraphs.

$\mathrm{UpdateRidge}$ computes the updated version of $e_i$ at the new parameters. The EVec $e_i$ is a continuous function of $\mathcal{H}$ (\cite{kato1995perturbation}, pg 110-111).\footnote{We rely on the fact that the EVecs are continuous functions of $\theta$; this follows from the fact that $\mathcal{L}(\theta)$ is continuous in $\theta$, and that $\mathcal{H}(\mathcal{L})$ is a continuous function of $\mathcal{L}$.} This allows us to `follow' $e_i(\theta^\Psi_t)$ (our `ridge') for a number of steps 
even though it is changing. In the exact version of RR, we recompute the spectrum of the Hessian after every parameter update, find the EVec with greatest overlap, and then step along this updated direction. While this updated EVec might no longer correspond to the $i$-th EVal, we maintain the subscript $i$ to index the ridge. 
The dependency of $e_i(\theta)$ and $\lambda_i(\theta)$ on $\theta$ is entirely implicit: $\mathcal{H}(\theta) e_i(\theta) = \lambda_i(\theta)e_i(\theta), \text{  } |e_i| = 1$.

 $\mathrm{EndRide}$ is a heuristic that determines how long we follow a given ridge for. For example, this can consider whether the curvature is still negative, the loss is decreasing and other factors. 

$\mathrm{GetRidges}$ determines which ridges we explore from a given branching point and in what order. 
Note that from a saddle, one can explore in \textit{opposite} directions along any negative EVec. 
Optionally, we select the $N$ most negative EVals.  

$\mathrm{ChooseFromArchive}$ provides the search-order over all possible paths. For example, we can use breadth first search (BFS), depth first search (DFS) or random search. In BFS, the archive is a FIFO queue, while in DFS it is a LIFO queue.  
Other orderings and heuristics can be used to make the search more efficient, such as ranking the archive by the current loss achieved. 

\vspace{-2mm}
\begin{algorithm}[H]
\SetAlgoLined
\caption{\methodlong{}}
\begin{algorithmic}[1]

\STATE \textbf{Input:}  $\theta^{\text{MIS}}$, $\alpha$, $\mathrm{ChooseFromArchive}, \mathrm{UpdateRidge}, \mathrm{EndRide}$, $\mathrm{GetRidges}$ 
\STATE $A=[ \{\theta^{\Psi = [i]}, e_i, \lambda_i \} \; \text{ for } i, e_i, \lambda_i \in \mathrm{GetRidges}(\theta^{\text{MIS}})]$ \COMMENT{Initialize Archive of Solutions}
\WHILE{$|A| > 0$} 
  \STATE $\{e_i, \theta^\Psi_0, \lambda_i \}, A  = \mathrm{ChooseFromArchive}(A)$ \COMMENT{Select a ridge from the archive}
  \WHILE{$\mathrm{True}$} 
    \STATE $\theta^\Psi_{t} = \theta^\Psi_{t-1} - \alpha e_i$ \COMMENT{Step along the Ridge with learning rate $\alpha$}
    \STATE $e_i, \lambda_i = \mathrm{UpdateRidge}(\theta^\Psi_t, e_i, \lambda_i)$ \COMMENT{Get updated Ridge}
    \IF{$\mathrm{EndRide}(\theta^\Psi_t, e_i, \lambda_i)$ } 
      \STATE break \COMMENT{Check Break Condition}
    \ENDIF
  \ENDWHILE 
  \STATE $A=A \cup \left[ \{\theta^{\Psi.\mathrm{append}(i)}, e_i, \lambda_i \} \; \text{ for } i, e_i, \lambda_i \in \mathrm{GetRidges}(\theta^\Psi ) \right]$ \COMMENT{Add new Ridges}
\ENDWHILE

\end{algorithmic}
\label{algorithm:diversityrr}
\end{algorithm}
\vspace{-4mm}

It can be shown that, under mild assumptions, \accmethod{} maintains a descent direction: At $\theta$, its EVals are $\lambda_1(\theta)  \geq \lambda_2(\theta) \cdots \geq \lambda_d(\theta)$ with EVecs $e_1(\theta), \cdots, e_d(\theta)$. We denote the eigengaps as $\Delta_{i-1} := \lambda_{i-1}(\theta) - \lambda_{i}(\theta) $ and $\Delta_i := \lambda_{i}(\theta) - \lambda_{i+1}(\theta) = \Delta_{i} $, with the convention $\lambda_0  = \infty$.

\begin{theorem}
Let $L: \Theta \rightarrow \mathbb{R}$ have $\beta-$smooth Hessian (i.e. $\| \mathcal{H}(\theta) \|_{op} \leq \beta$ for all $\theta$), let $\alpha$ be the step size. If $\theta$ satisfies:
$\langle \nabla L(\theta), e_i(\theta) \rangle \geq  \| \nabla L(\theta) \| \gamma $ for some $\gamma \in (0,1)$, and $\alpha \leq \frac{\min(\Delta_i, \Delta_{i-1}) \gamma^2 }{16\beta} $ then after two steps of RR:
\begin{equation*}
L(\theta'') \leq     L(\theta)  - \gamma \alpha\| \nabla L(\theta) \| 
\end{equation*}
Where $\theta' = \theta - \alpha e_i(\theta)$ and $\theta'' = \theta' - \alpha e_i(\theta')$.
\end{theorem}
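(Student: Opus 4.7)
The plan is to bound $L(\theta'') - L(\theta)$ by applying the standard $\beta$-smooth descent lemma at both $\theta$ and $\theta'$, and then controlling the drift of the ridge between the two steps via eigenvector-perturbation theory. Applying the descent lemma to the first step yields $L(\theta') \leq L(\theta) - \alpha \langle \nabla L(\theta), e_i(\theta)\rangle + \tfrac{\alpha^2 \beta}{2}$, and to the second step yields $L(\theta'') \leq L(\theta') - \alpha \langle \nabla L(\theta'), e_i(\theta')\rangle + \tfrac{\alpha^2 \beta}{2}$. Summing gives
\begin{equation*}
L(\theta'') - L(\theta) \;\leq\; -\alpha\bigl[\langle \nabla L(\theta), e_i(\theta)\rangle + \langle \nabla L(\theta'), e_i(\theta')\rangle\bigr] + \alpha^2 \beta .
\end{equation*}
The hypothesis already controls the first inner product by $\gamma \|\nabla L(\theta)\|$, so the entire argument reduces to showing that the second inner product has not shrunk much relative to the first.

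To compare the two inner products I would split $\langle \nabla L(\theta'), e_i(\theta')\rangle - \langle \nabla L(\theta), e_i(\theta)\rangle$ into a gradient-drift piece $\langle \nabla L(\theta') - \nabla L(\theta), e_i(\theta')\rangle$ and an eigenvector-drift piece $\langle \nabla L(\theta), e_i(\theta') - e_i(\theta)\rangle$. The former is bounded in absolute value by $\|\nabla L(\theta') - \nabla L(\theta)\| \leq \beta \alpha$ directly from smoothness. The latter is where I would invoke the Kato/Davis--Kahan perturbation bound already cited in the paragraph describing \textsf{UpdateRidge}: since $\|\theta' - \theta\| = \alpha$, the Hessian is perturbed by at most $O(\beta \alpha)$, and the $i$-th eigenvector therefore moves by at most $O\bigl(\beta \alpha / \min(\Delta_i, \Delta_{i-1})\bigr)$. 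Both eigengaps must appear because Davis--Kahan isolates $e_i$ from its spectral neighbours on either side, which is exactly why $\min(\Delta_i, \Delta_{i-1})$ sits in the hypothesis on $\alpha$.

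Plugging these into the split above gives $\langle \nabla L(\theta'), e_i(\theta')\rangle \geq \gamma \|\nabla L(\theta)\| - \beta \alpha - O\bigl(\beta \alpha \|\nabla L(\theta)\| / \min(\Delta_i, \Delta_{i-1})\bigr)$. The step-size constraint $\alpha \leq \min(\Delta_i, \Delta_{i-1}) \gamma^2 / (16 \beta)$ makes the multiplicative error at most $\gamma^2/16$ of the lead term, and the quadratic $\alpha^2 \beta$ slack from the two Taylor expansions is similarly absorbed; the constant $16$ in the step-size condition is engineered to leave room for each of these error contributions while still producing the promised $-\gamma \alpha \|\nabla L(\theta)\|$ descent.

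The main obstacle is the eigenvector-perturbation step. Davis--Kahan natively produces a $\sin\Theta$ bound, so one has to fix a continuous phase/sign convention for $e_i(\cdot)$ along the ridge, consistent with the rule in \textsf{UpdateRidge} that selects the EVec of greatest overlap with $e_i(\theta)$. A secondary subtlety is that the theorem statement defines ``$\beta$-smooth Hessian'' by $\|\mathcal{H}\|_{op} \leq \beta$, yet the argument tacitly uses Lipschitzness of $\mathcal{H}$ to convert the small step $\alpha$ into a small Hessian perturbation; this is the reading I would have to adopt (or state as an implicit assumption) for the perturbation inequality to do any work. Finally, I would note that the theorem places no sign restriction on $\lambda_i$, so the argument has to remain valid whether the ridge being followed is a descent through negative curvature or merely aligned with the gradient in a positive-curvature direction.
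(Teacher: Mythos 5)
Your skeleton (two Taylor steps plus a bound on how much the ridge drifts between them, controlled by the eigengap) matches the paper's, but there is a genuine gap in how you handle the second-order terms. You apply the standard descent lemma with $\|\mathcal{H}\|_{op}\leq\beta$, which leaves a residue of $+\alpha^2\beta$ per step. To absorb $2\alpha^2\beta$ into the remaining $-\tfrac{1}{2}\gamma\alpha\|\nabla L(\theta)\|$ of slack you would need $\alpha \lesssim \gamma\|\nabla L(\theta)\|/\beta$, but the hypothesis $\alpha \leq \min(\Delta_i,\Delta_{i-1})\gamma^2/(16\beta)$ says nothing about $\|\nabla L(\theta)\|$, which can be arbitrarily small (indeed RR starts near a saddle). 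So this error term cannot in general be absorbed, and the claimed conclusion does not follow from your bounds. The paper avoids this entirely by exploiting that the step direction is an eigenvector: the quadratic term in the Taylor expansion is \emph{exactly} $\tfrac{\alpha^2}{2} e_i(\theta)^\top \mathcal{H}(\theta) e_i(\theta) = \tfrac{\alpha^2\lambda_i(\theta)}{2}$, which is nonpositive along a negative-curvature ridge, and the leftover error is pushed to $O(\beta\alpha^3)$ via Lipschitzness of the Hessian (this is the sense of ``$\beta$-smooth Hessian'' actually used, confirming your suspicion about the parenthetical in the statement). Your closing remark that the argument should work without a sign restriction on $\lambda_i$ is therefore backwards: the paper's final inequality explicitly uses $\lambda_i(\theta), \lambda_i(\theta') \leq 0$ to discard the quadratic terms, and your version needs it even more.

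A secondary discrepancy: for the eigenvector drift you invoke a Davis--Kahan rate of order $\beta\alpha/\min(\Delta_i,\Delta_{i-1})$, whereas the paper proves and uses a weaker bound, $\|e_i(\theta)-e_i(\theta')\| \leq \sqrt{4\beta\alpha/\min(\Delta_i,\Delta_{i-1})}$ (its Lemma on Lipschitzness of eigenvectors). The constant $16$ in the step-size condition is calibrated precisely so that this square root is at most $\gamma/2$, i.e.\ $4\beta\alpha/\Delta \leq \gamma^2/4$; your narrative about the $16$ absorbing several error contributions multiplicatively does not match this accounting. Your stronger perturbation bound would also suffice here if you proved it (with the sign convention you correctly identify as necessary), so this part is a difference of route rather than an error --- but the $+\alpha^2\beta$ issue above must be repaired before the proof closes.
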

In words, as long as the correlation between the gradient and the eigenvector \accmethod{} follows remains large, the slow change in eigenvector curvature will guarantee \accmethod{} remains on a descent direction.
Further, starting from any saddle, after $T$-steps of following $e_i(\theta_t)$, the gradient is $\nabla \mathcal{L}(\theta_T) = \alpha \sum_{t} \lambda_i(\theta_t) e_i(\theta_t) + \mathcal{O}(\alpha^2)$. Therefore, $ \langle \nabla \mathcal{L}(\theta), e_i(\theta_T) \rangle = \alpha \sum_{t} \lambda_i(\theta_t) \langle e_i(\theta_t), e_i(\theta_T) \rangle  + \mathcal{O}(\alpha^2)$
Thus, assuming $\alpha$ is small, a sufficient condition for reducing the loss at every step is that $\lambda_i(\theta_t) <0,  \forall t$ and the $e_i(\theta_t)$ have positive overlap, $\langle e_i(\theta_t),e_i(\theta_{t'})  \rangle > 0,  \forall t, t'$. Proofs are in Appendix~\ref{subsection::staying_on_ridge}.

\textbf{Approximate  \method{}:}
\label{subsection:scalablerr}
In exact \accmethod{} above, we assumed that we can compute the Hessian and also obtain all EVecs and EVals. 
To scale \accmethod{} to large DNNs, we make two modifications. First, in $\mathrm{GetRidges}$ we use the power method (or Lanczos method \citep{lanczos}) to obtain approximate versions of the N most negative $\lambda_i$ and corresponding $e_i$. Second, in $\mathrm{UpdateRidge}$ we use gradient descent after each parameter update $\theta^\Psi \rightarrow \theta^\Psi - \alpha e_i$ to yield a new $e_i$, $\lambda_i$ pair that minimizes the following loss:
\begin{align*}
     L(e_i, \lambda_i ; \theta)&= |(1/\lambda_i) \mathcal{H}(\theta)e_i / |e_i| - e_i / |e_i| |^2 
\end{align*}
We warm-start with the 1st-order approximation to $\lambda(\theta)$, where $\theta', \lambda', e'_i$ are the previous values:
\begin{align*}
    \lambda_i(\theta) &\approx  \lambda'_i + e'_i \Delta \mathcal{H} e'_i = \lambda'_i +  e'_i (\mathcal{H}(\theta)- \mathcal{H}(\theta') )  e'_i 
\end{align*}
Since these terms only rely on Hessian-Vector-products, they can be calculated efficiently for large scale DNNs in any modern auto-diff library, \emph{e.g.} Pytorch \citep{pytorch}, Tensorflow~\citep{abadi2016tensorflow} or Jax \citep{jax2018github}. See Algorithm~\ref{algorithm:approximaterr} in the Appendix (Sec. \ref{sec:exp_details}) for pseudocode. 

We say $e_0 = e_i(\theta)$ and $e_t$ is the $t-$th EVal in the algorithm's execution. This algorithm has the following convergence guarantees, \emph{ie.}:

\begin{theorem}
If $L$ is $\beta-$smooth, $\alpha_e = \min(1/4, \Delta_{i}, \Delta_{i-1})$, and $\| \theta - \theta'\| \leq \frac{\min(1/4, \Delta_{i}, \Delta_{i-1})}{\beta} $ then $|\langle e_t, e_i(\theta') \rangle| \geq 1- \left(1-\frac{\min(1/4, \Delta_{i}, \Delta_{i-1})}{4} \right)^{t} $
\end{theorem}
This result characterizes an exponentially fast convergence for the approximate  \method{} optimizer. If the eigenvectors are well separated, UpdateRidge will converge faster. The proof is in Appendix~\ref{section::convergence_eigenvalue_eigenvector}.

\textbf{\method{} for Zero-Shot Coordination in Multi-Agent Settings:}
\label{subsection:zeroshotrr}
\accmethod{} provides a natural decision scheme for this setting -- decide in advance that each agent will explore the top $F$ fingerprints. For each fingerprint, $\Psi$, run $N$ independent replicates $\pi$ of the \accmethod{} procedure and compute the average cross-play score among the $\pi$ for each $\Psi$. At test time, deploy a $\pi$ corresponding to a fingerprint with the highest score. Cross-play between two policies, $\pi^a$ and $\pi^b$, is the expected return, $J(\pi^a_1,\pi^b_2) $, when agent one plays their policy of $\pi^a$ with the policy for agent two from $\pi^b $.

This solution scheme relies on the fact that the ordering of unique EVals is consistent across different runs. Therefore, fingerprints corresponding to polices upon which agents can reliably coordinate will produce mutually compatible polices across different runs. Fingerprints corresponding to arbitrary symmetry breaking will be affected by inconsistent EVal ordering since EVals among equivalent directions are equal. 

Consequently, there are two key insights that makes this process succeed without having to know the symmetries. The first is that the MIS initial policy is invariant with respect to the symmetries of the task, and the second is that equivalent parameter sets lead to repeated EVals.

\textbf{Extending  \method{} for Out of Distribution Generalization:} 
\label{subsec:OOD} Consider the following coordination game. Two players are each given access to a non-overlapping set of training environments, with the goal of learning consistent features. 
While both players can agree beforehand on a training scheme and a network initialization, they cannot communicate after they have been given their respective datasets. This coordination problem resembles OOD generalization in supervised learning.

\accmethod{} could be adapted to this task by finding solutions which are reproducible across all datasets. One necessary condition is that the EVal and EVec being followed is consistent across training environments $\xi \in \Xi$ to which each player has access:
\begin{align*}
    \mathcal{H}_\xi e = \lambda e, \text{ }  \forall \xi \in \Xi,  \text{ }  \lambda < 0
\end{align*}
 where   $\mathcal{H}_\xi$  is the Hessian of the loss evaluated on environment  $\xi$, \emph{i.e.}, $ \nabla^2_\theta \mathcal{L}_{\xi}$.
Unfortunately, such $e, \lambda$ do not typically exist since there are no consistent features present in the raw input. 
To address this, we extend \accmethod{} by splitting the parameter space into $\Theta_f$ and $\Theta_r$. The former embeds inputs as \textbf{f}eatures for the latter, creating an abstract feature space representation in which we can run RR.

For simplicity, we consider only two training environments with, respectively, Hessians $\mathcal{H}^1_r$ and $\mathcal{H}^2_r$. The $R$ indicates that we are only computing the Hessian in the subspace $\Theta_r$.
Since we are not aware of an efficient and differentiable method for finding common EVecs, we parameterize a differentiable loss function to optimize for  an approximate common EVec, $e_r$, of the Hessians of all training environments in $\Theta_r$. This loss function forces high correlation between $H_r e_r$ and $e_r$ for both environments, encourages negative curvature, prevents the loss from increasing, and penalizes differences in the EVals between the two training environments:
\begin{align*}
    \mathcal{L}_{1}(\theta_{f}, e_r | \theta_r) =\sum_{i \in 1,2} \big( -\beta_1 \mathrm{C}(\mathcal{H}^i_r e_r, e_r) - \beta_2 e_r \mathcal{H}^i_r e_r + \beta_3 {L}_{\xi_i}(\theta_{f}| \theta_r)\big) + \frac{|e_r ( \mathcal{H}^1_r - \mathcal{H}^2_r) e_r   |}{|e_r|^2}.
\end{align*}

Here $\mathrm{C}$ is the correlation (a normalized inner product) and the ``$( \cdot |\theta_r)$'' notation indicates that $\theta_r$ is not updated when minimizing this loss, which can be done via stop\_gradient. 
All $\beta_i$ are hyperparameters. The minimum of $\mathcal{L}_{1}(\theta_{f}, e_r | \theta_r)$ is a consistent, negative EVal/EVec pair with low loss.

For robustness, we in parallel train $\theta_f$ to make the Hessian in $\theta_r$ consistent across the training environments in other directions. We do this by sampling random unit vectors, $u_r$ from $\Theta_r$ and comparing the inner products taken with the Hessians from each environment. The loss, as follows, has a global optimum at $0$ when $\mathcal{H}^1 = \mathcal{H}^2$:
\begin{align*}
    \mathcal{L}_{2}(\theta_{f} | \theta_r) =\mathbb{E}_{u_r\sim\Theta_r} \beta_4 \frac{|\mathrm{C}(\mathcal{H}^1_r u_r, u_r)^2 - \mathrm{C}(\mathcal{H}^2_r u_r, u_r)^2 |}{\mathrm{C}(\mathcal{H}^1_r u_r, u_r)^2 +\beta_5 \mathrm{C}(\mathcal{H}^1_r u_r, u_r)^2}+ \frac{|u_r ( \mathcal{H}^1_r - \mathcal{H}^2_r) u_r|}{|u_r  \mathcal{H}^1_r u_r| + |u_r \mathcal{H}^2_r  u_r|}.
\end{align*}
\accmethod{} now proceeds by starting with randomly initialized $e_r$, $\theta_f$, and $\theta_r$. Then we iteratively update $e_r$ and $\theta_f$ by running n-steps of SGD on $\mathcal{L}_{1} + \mathcal{L}_{2}$. We then run a step of \accmethod{} with the found approximate EVec $e_r$ and repeat. Pseudo-code is provided in the Appendix (Sec. \ref{sec:ood}).

\textbf{Goodhart's Law, Overfitting and Diverse Solutions}
\label{subsection:goodharts}
As mentioned in Section~\ref{sec:intro}, \method{} directly relates  to Goodhart's law, which states that any measure of progress fails to be useful the moment we start optimizing for it. So while it is entirely legitimate to use a validation set to estimate the generalization error for a DNN trained via SGD \emph{after the fact}, the moment we directly optimize this performance via SGD it seizes to be informative. 

In contrast, \method{} allows for a two step optimization process: We first produce a finite set of diverse solutions using only the training set and then use the validation data to chose the best one from these. Importantly, at this point we can use any generalization bound for finite hypothesis classes to bound our error ~\citep{mohri2018foundations}. For efficiency improvements we can also use the validation performance to locally guide the search process, which makes it unnecessary to actually compute all possible solutions.

Clearly, rather than using ~\accmethod{} we could try to produce a finite set of solutions by running SGD many times over. However, typically this would produce the same type of solution and thus fail to find those solutions that generalize to the validation set.

\section{Experiments}
\label{sec:experiments}

We evaluate \accmethod{} in the following settings: exploration in RL, zero-shot coordination, and supervised learning on both MNIST and the more challenging Colored MNIST problem \cite{arjovsky2019invariant}. In the following section we introduce each of the settings and present results in turn. Full details for each setting are given in the Appendix (Sec. \ref{sec:expdetails}).

\paragraph{\method{} for Diversity in Reinforcement Learning}
\label{subsection:diversityrr} To test whether we can find diverse solutions in RL, we use a toy binary tree environment with a tabular policy (see Fig. \ref{fig:tree_results}). The agent begins at $s_1$, selects actions $a\in \{\mathrm{left}, \mathrm{right}\}$, receiving reward $r \in \{-1, 10\}$ upon reaching a terminal node.   For the loss function, we compute the expectation of a policy as the sum of the rewards of each terminal node, weighted by the cumulative probability of reaching that node. The maximum reward is $10$. 

We first use the exact version of \method{} and find $\theta^\text{MIS}$, by maximizing entropy. For the $\mathrm{ChooseFromArchive}$ precedure, we use BFS. In this case we have access to exact gradients so can cheaply re-compute the Hessian and EVecs. As such, when we call $\mathrm{UpdateRidge}$ we adapt the learning rate $\alpha$ online to take the largest possible step while preserving the ridge, similar to Backtracking Line Search \cite{numericalopt}. We begin with a large $\alpha$, take a step, recompute the EVecs of the Hessian and the maximum overlap $\delta$. We then and sequentially halve $\alpha$ until we find a ridge satisfying the $\delta_\mathrm{break}$ criteria (or $\alpha$ gets too small). 
In addition, we use the following criteria for $\mathrm{EndRide}$: (1) If the dot product between $e_i$ and $e'_i$ is less than $ \delta_{\text{break}}$ (2) If the policy stops improving. 

We run \method{} with a maximum budget of $T=10^5$ iterations  similarity $\delta_\mathrm{break}=0.95$, and take only the top $N=6$ in $\mathrm{GetRidges}$. As baselines, we use gradient descent (GD) with random initialization, GD starting from the MIS, and random norm-one vectors starting from the MIS. All baselines are run for the same number of timesteps as used by \method{} for that tree. For each depth $d\in \{4,6,8,10\}$ we randomly generate $20$ trees and record the percentage of positive solutions found. 
\vspace{-2mm}
\begin{figure}[H]
        \begin{minipage}{0.99\textwidth}
        \vspace{-3mm}
        \centering\includegraphics[width=0.18\linewidth]{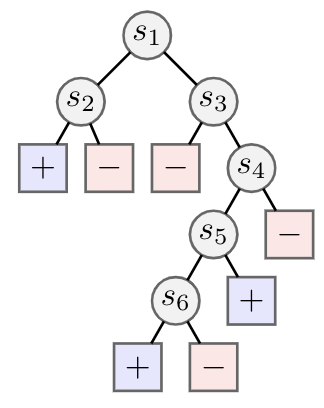}
        \centering\includegraphics[width=0.78\linewidth]{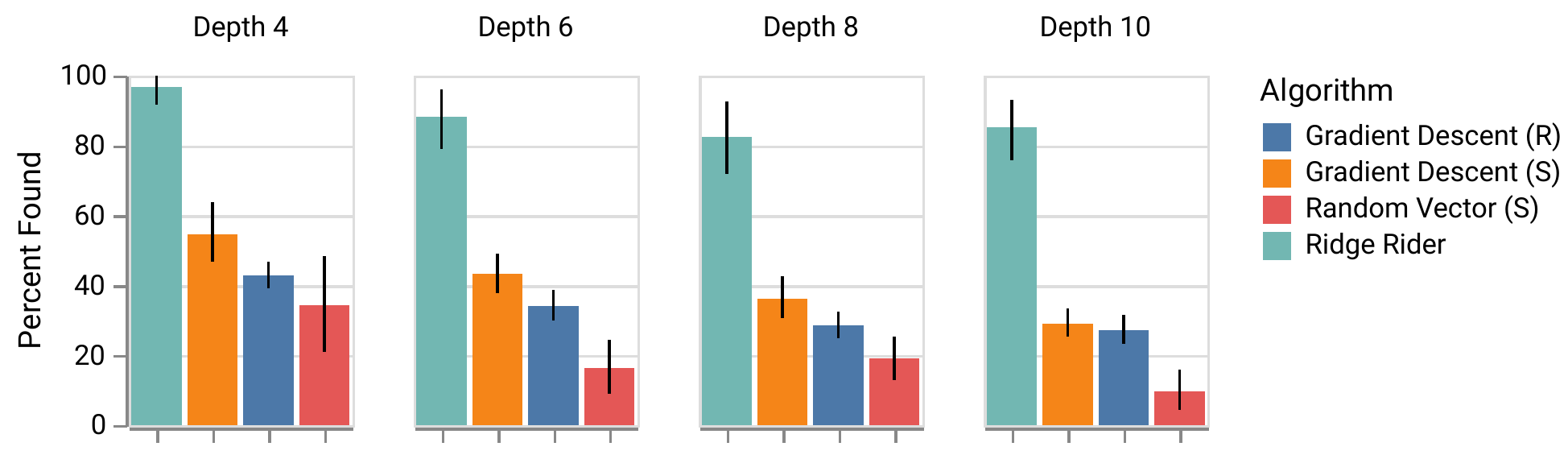}
        \end{minipage}
        \caption{\textbf{Left}: a tree with six decision nodes and seven terminal nodes, four of which produce negative rewards (red) and three of which produce positive rewards (blue). \textbf{Right}: The percentage of solutions found per algorithm, collated by tree depth. R and S represent starting from a random position or from a saddle, respectively. Trees at each depth are randomly generated $20$ times to produce error estimates shown.}
        \label{fig:tree_results}
\end{figure}
\vspace{-4mm}

\begin{wrapfigure}{r}{0.32\textwidth} 
\label{fig:ablation}
\vspace{-8mm}
    \centering
    \includegraphics[width=0.32\textwidth]{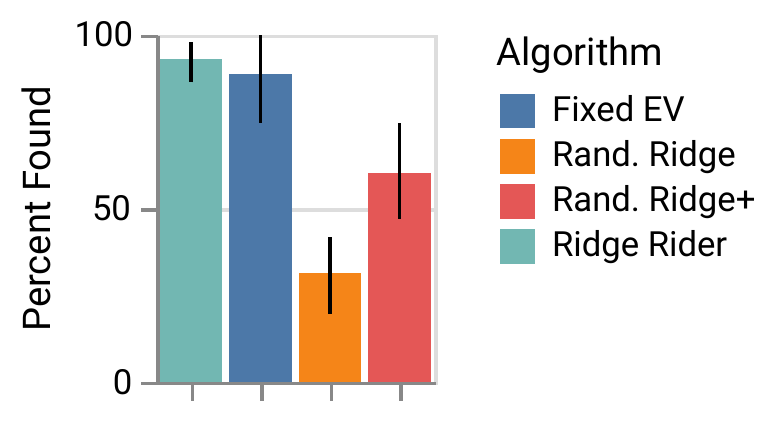}    
    \vspace{-7mm}
    \caption{\small Tree depth 12, ten seeds.}
    \vspace{-6mm}
    \label{fig:ablation_rl}
\end{wrapfigure}

On the right hand side of Fig~\ref{fig:tree_results}, we see that \method{} outperforms all three baselines. While \method{} often finds over $90\%$ of the solutions, GD finds at most $50\%$ in each setting. Importantly, following random EVecs performs poorly, indicating the importance of using the EVecs to explore the parameter space. To run this experiment, see the notebook at \url{https://bit.ly/2XvEmZy}.

Next we include two additional baselines: (1) following EVecs, but not updating them (\emph{Fixed-EV}). (2) following random unit vectors with \textit{positive ascent direction} (\emph{Rand-Ridge+}), and compare vs. \accmethod{}.
We ran these with a fixed budget, for a tree of depth 12. We used the same hyperparameters for \method{} and the ablations. As we see in Fig. \ref{fig:ablation_rl}, Fixed-EVs obtains competitive performance. This clearly illustrates the importance of following EVs rather than random directions. 

Finally, we open the door to using \accmethod{} in deep RL by computing Hessians using samples, leveraging more accurate higher order gradients produced by the DiCE objective \citep{dice}. Once again, \accmethod{} is able to find more diverse solutions than SGD (see Fig \ref{fig:dice_rl}).

\textbf{\method{} for Supervised Learning} We applied approximate \accmethod{} to MNIST with a 2-layer MLP containing 128 dimensions in the hidden layer. As we see on the left hand side of Fig~\ref{fig:mnist_plots}, we found that we can achieve respectable performance of approximately $98\%$ test and train accuracy. Interestingly, updating $e$ to follow the changing eigenvector is crucial. A simple ablation which sets LR$_{e}$ to 0 fails to train beyond $90\%$ on MNIST, even after a large hyper parameter sweep (see the right side of Fig~\ref{fig:mnist_plots}). We also tested other ablations. As in RL, we consider using random directions. Even when we force the random vectors to be ascent directions (Rand.Ridge +), the accuracy does not exceed 30\%. In fact, the outperformance from RL is more pronounced in MNIST, which is intuitive since random search is known to scale poorly to high dimensional problems. We expect this effect to be even more pronounced as Approximate \method{} is applied to harder and higher dimensional tasks in the future.

 \begin{figure*}[ht]
    \centering
    \includegraphics[width=0.3\textwidth]{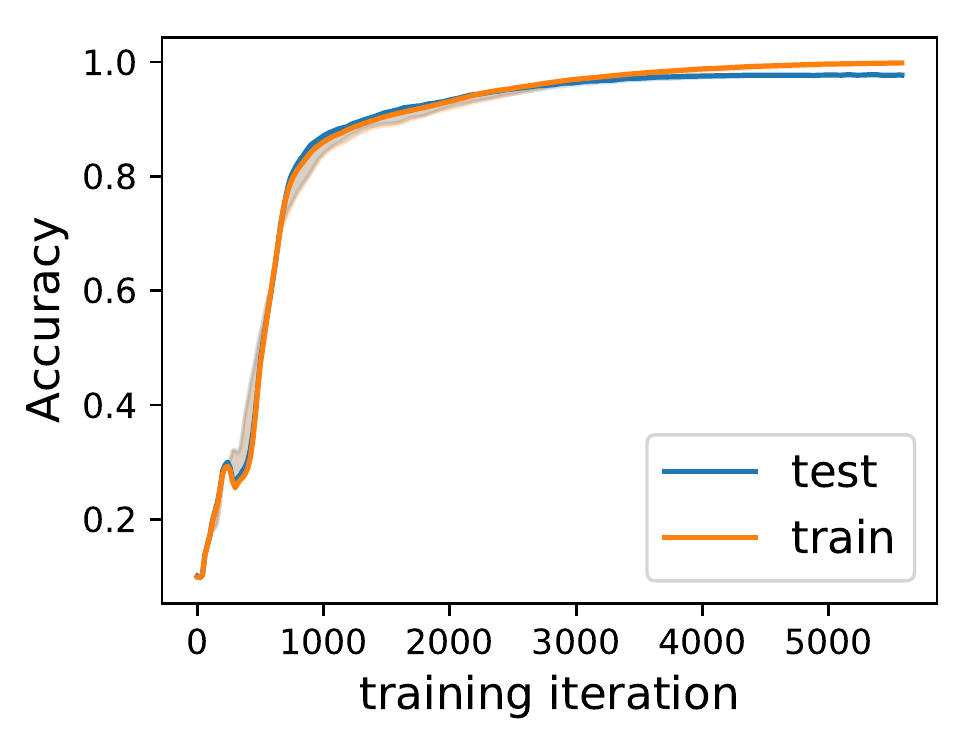}
    \includegraphics[width=0.3\textwidth]{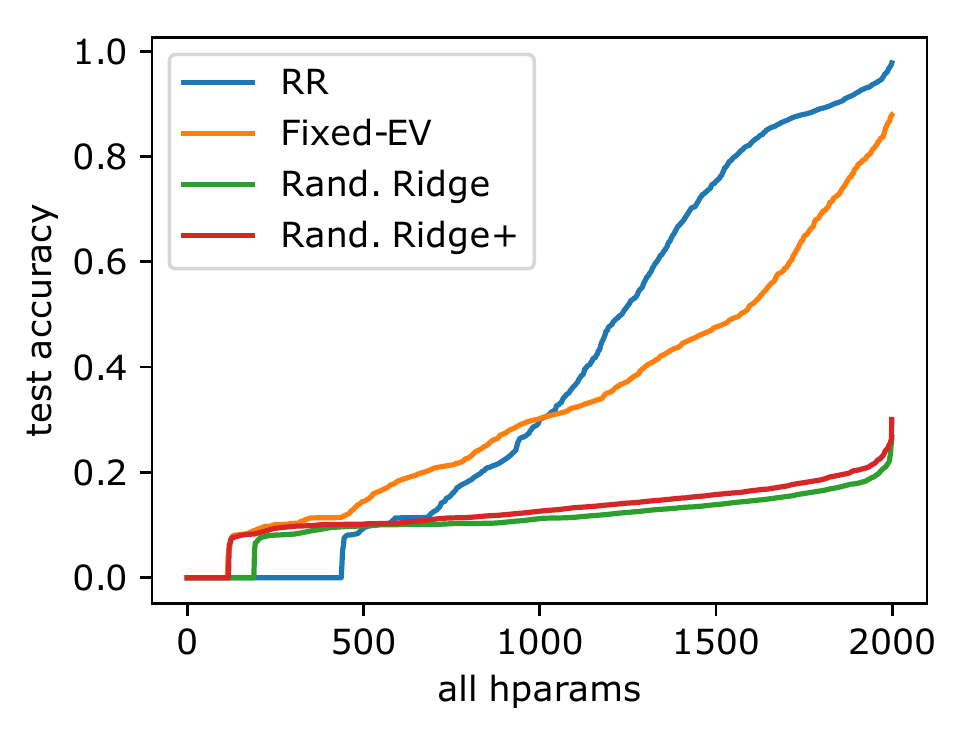}
    \caption{\textbf{Left}: Test and training accuracy on MNIST. Our hyperparameters for this experiment were: $S=236$, $\alpha=0.00264$, $\text{LR}_x~=0.000510$, $\text{LR}_\lambda=4.34e^{-6}$, $\text{batch size}=2236$. \textbf{Right}: We compare a hyperparameter sweep for approximate \method{} on MNIST with a simple ablation: Rather than updating the ridge (EVec), we set LR$_e$ to 0, i.e. keep following the original direction of the EVec. Shown are the runs that resulted in over $>60\%$ final test accuracy out of a hyper parameter sweep over 2000 random trials. We note that updating the ridge is absolutely crucial of obtaining high performance on MNIST - simply following the fixed eigenvectors with an otherwise unchanged \method{} algorithm never exceeds the performance of a linear classifier.}
    \label{fig:mnist_plots}
\end{figure*}

With fixed hyperparameters and initialization, the order in which the digit classes are learned changes according to the fingerprint. This is seen in Fig~\ref{fig:mnist_diversity}. The ridges with low indices (i.e. EVecs with very negative curvature) correspond to learning `0' and `1' initially, the intermediate ridges correspond to learning `2' and `3' first, and the ridges at the upper end of the spectrum we sampled (ie. $>30$) correspond to learning features for the digit ``8''.

\begin{figure*}[ht]
    \centering
    \includegraphics[width=\textwidth]{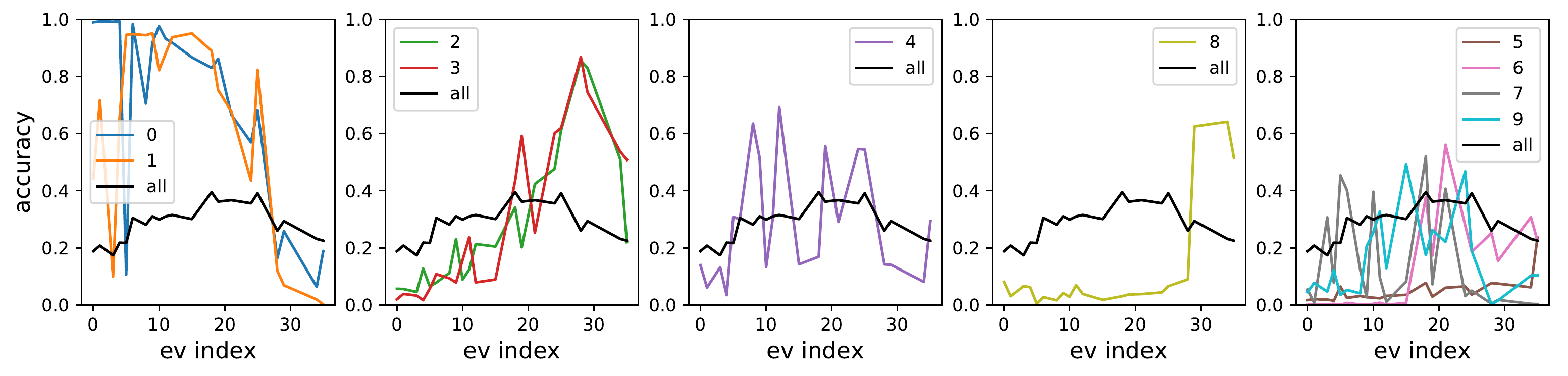}
    \includegraphics[width=\textwidth]{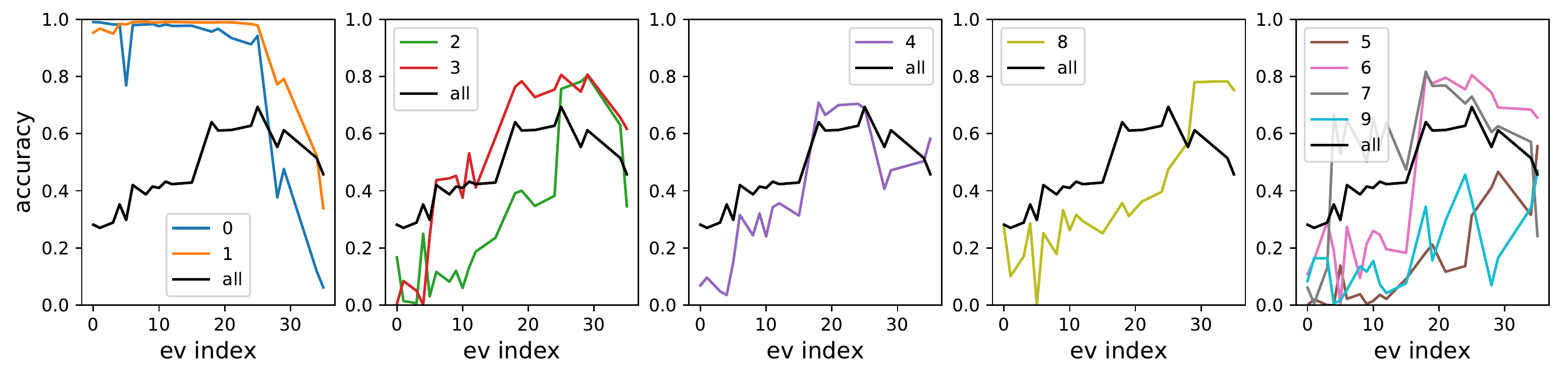}
    \caption{Class accuracy for different digits as a function of the index of the first ridge ($\psi[0]$), i.e the ranking of the EVal corresponding to the first EVec we follow. \textbf{Top}: Early in training -- average between $200$ and $600$ steps. \textbf{Bottom}: Later in training, averaged between $4000:5000$ steps. The architecture is the same MLP as in Figure~\ref{fig:mnist_plots}, but the hyperparameters are: $S=1$, $\alpha=0.000232$, $\text{LR}_x=3.20e^{-6}$, $LR_\lambda=0.00055$, $\text{batch size}= 2824$.}
    \label{fig:mnist_diversity}
\end{figure*}

\paragraph{\method{} for Zero-Shot Coordination} We test \accmethod{} as described in Sec. \ref{subsection:zeroshotrr} on multi-agent learning using the lever coordination game from~\citep{hu2020other}. The goal is to maximize the expected reward $J$ when playing a matrix game with a stranger. On each turn, the two players individually choose from one of ten levers. They get zero reward if they selected different levers and they get the payoff associated with the chosen lever if their choices matched. Importantly, not all levers have the same payoff. In the original version, nine of the ten levers payed $1$ and one paid $.9$. In our more difficult version, seven of the ten levers pay $1$, two `partial coordination' levers pay $0.8$, and one lever uniquely pays $0.6$. In self-play, the optimal choice is to pick one of the seven high-paying levers. However, since there are seven equivalent options, this will fail in zero-shot coordination. Instead, the optimal choice is to pick a lever which obtains the highest expected payoff when paired with any of the equivalent policies. Like in the RL setting, we use a BFS version of exact \accmethod{} and the MIS is found by optimizing for high entropy and low gradient.

\begin{figure*}[h]
    \centering
    \includegraphics[width=\linewidth]{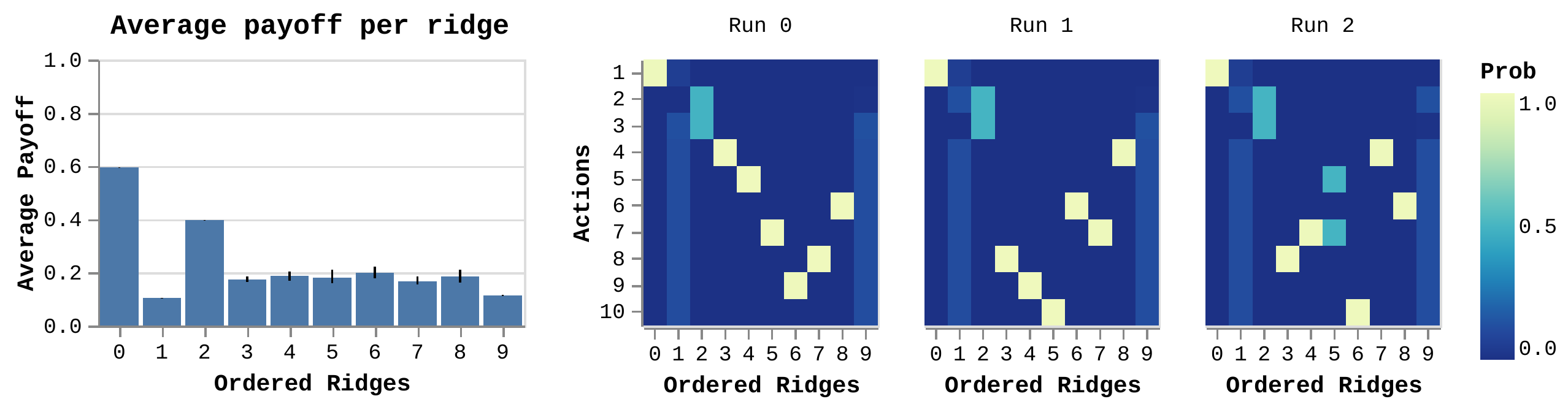}
    \caption{Zero-Shot Coordination: On the left, we see the average payoff per ridge over 25 runs, repeated five times to yield error estimates. As expected, the highest payoff is $0.6$ and it occurs when both agents find the symmetry breaking solution, even though that solution yields the lowest payoff in self-play. On the right, we see the results of three randomly chosen runs where each square is a probability that \emph{one} of the two agents select that action. We verified that the agents in each run and each ridge agree on the greedy action.}
    \label{fig:matrix_game}
\end{figure*}

We see in Fig~\ref{fig:matrix_game} that \accmethod{} is able to find each solution type: the self-play choices that do poorly when playing with others (ridges 1, 3-5, 7-9), the `partial coordination' that yield lower reward overall (ridges 2 and 6), and the ideal coordinated strategy (ridge 0). Note that based on this protocol the agent will chose the $0.6$ lever, achieving perfect zero-shot coordination. To run the zero-shot coordination experiment, see the notebook at \url{https://bit.ly/308j2uQ}.



\paragraph{\method{} for Out of Distribution Generalization} We test our extension of \accmethod{} from Sec \ref{subsec:OOD} on OOD generalization using Colored MNIST \citep{arjovsky2019invariant}. Following Sec~\ref{subsection:oodg}, for each of $\xi_{1, 2}$ in $\Xi$, as well as test environment $\xi_3$, $x_i \in D_{\xi_k}$ are drawn from disjoint subsets of MNIST \citep{lecun2010mnist} s.t. $|D_{\xi_{1, 2}}| = 25000$ and $|D_{\xi_{3}}| = 10000$. Further, each has an environment specific $p_{\xi_k}$ which informs $\mathcal{D}_{\xi_k}$ as follows: For each $x_i \in D_{\xi_k}$, first assign a preliminary binary label $\tilde{y_i}$ to $x_i$ based on the digit -- $\tilde{y_i} = 0$ for $x_i \in [0,4]$ and $\tilde{y_i} = 1$ for $x_i \in [5,9]$. The actual label $y_i$ is $\tilde{y_i}$ but flipped with probability $.25$. Then, sample color id $z_i$ by flipping $y_i$ with probability $p_{\xi_k}$, where $p_{\xi_1}=0.2$, $p_{\xi_2}=0.1$, and $p_{\xi_3}=0.9$. Finally, color the image red if $z_i = 1$ or green if $z_i = 0$.  Practically, we first optimize $\mathcal{L}_1+\mathcal{L}_2$ to find the MIS, resampling the DNN when optimization fails to obtain a low loss.

\begin{wraptable}{r}{5.5cm}
\vspace{-5mm}
\begin{tabular}{c|c|c}
Method & Train Acc & Test Acc \\
\toprule
\textbf{\accmethod{}} & $65.5 \pm 1.68$ & $58.4 \pm 2.41$ \\
ERM & $87.4 \pm 1.70$ & $17.8 \pm 1.33$ \\
IRM & $69.7 \pm .710$ & $65.7 \pm 1.42$ \\
\toprule
Chance & $50$ & $50$ \\
Optimal & $75$ & $75$ \\
\end{tabular}
\vspace{-1mm}
\caption{\textbf{Colored MNIST}: Accuracies on a $95\%$ confidence interval. \accmethod{} is in line with causal solutions.}
\label{table:colored-mnist}
\vspace{-3mm}
\end{wraptable} 

Chance is $50\%$. The optimal score is $75\%$ on train \emph{and} test. Fitting a neural network with ERM and SGD yields around $87\%$ on train and $18\%$ on test because it only finds the spurious color correlation. Methods which instead seek the more causal digit explanation achieve about $66\%-69\%$\cite{arjovsky2019invariant,ahuja2020invariant,krueger2020outofdistribution}. As shown in Table~\ref{table:colored-mnist}, our results over $30$ runs achieve a high 
after nine steps along the ridge of $65.5\% \pm 1.68$ on train and $58.4\% \pm 2.41$ on test. Our results are clearly both above chance and in line with models that find the causal explanation rather than the spurious correlative one. To run the out of distribution generalization experiment, see the notebook at \url{https://bit.ly/3gWeFsH}. See Fig~\ref{fig:colored_mnist} in the Appendix for additional results.

\section{Discussion}

We have introduced \method{}, a novel method for finding specific types of solutions, which shows promising results in a diverse set of problems. In some ways, this paper itself can be thought of as the result of running the breadth-first version of \accmethod{} - a set of early explorations into different directions of high curvature, which one day will hopefully lead to SotA results, novel insights and solutions to real world problems.
However, there is clearly a long way to go. Scaling \accmethod{} to more difficult problems will require a way to deal with the noise and stochasticity of these settings. It will also require more efficient ways to compute eigenvalues and eigenvectors far from the extreme points of the spectrum, as well as better understanding of how to follow them robustly. Finally, \accmethod{} hints at conceptual connections between generalization in supervised learning and zero-shot coordination, which we are just beginning to understand. Clearly, symmetries and invariances in the Hessian play a crucial, but under-explored, role in this connection.

\section*{Acknowledgements}
We'd like to thank Brendan Shillingford, Martin Arjovsky, Niladri Chatterji, Ishaan Gulrajani and C. Daniel Freeman for providing feedback on the manuscript.

\section*{Broader Impact}

We believe our method is the first to propose following the eigenvectors of the Hessian to optimize in the parameter space to train neural networks. This provides a stark contrast to SGD as commonly used across a broad spectrum of applications. Most specifically, it allows us to seek a variety of solutions more easily. Given how strong DNNs are as function approximators, algorithms that enable more structured exploration of the range of solutions are more likely to find those that are semantically aligned with what humans care about.

In our view, the most significant advantage of that is the possibility that we could discover the minima that are not `shortcut solutions' \citep{Geirhos2020a} like texture but rather generalizable solutions like shape \citep{DBLP:journals/corr/abs-1811-12231}. The texture and shape biases are just one of many problematic solution tradeoffs that we are trying to address. This also holds for non-causal/causal solutions (the non-causal or correlative solution patterns are much easier to find) as well as concerns around learned biases that we see in applied areas across machine learning. All of these could in principle be partially addressed by our method.

Furthermore, while SGD has been optimized over decades and is extremely effective, there is no guarantee that  \method{} will ever become a competitive optimizer. However, maybe this is simply an instance of the `no-free-lunch' theorem \citep{nfl} - we cannot expect to find diverse solutions in science unless we are willing to take a risk by not following the locally greedy path. Still, we are committed to making this journey as resource-and time efficient as possible, making our code available and testing the method on toy environments are important measures in this direction. 

\bibliographystyle{abbrv}
\bibliography{refs}

\newpage
\appendix

\section*{Appendix: Ridge Rider}

\section{Related Work}
The common approach to finding a specific type of solution to a DNN optimization problem is to modify SGD via a range of algorithmic approaches to initialization, update rules, learning rate, and so forth~\cite{DBLP:journals/corr/ChaudhariCSL16,DBLP:journals/corr/KeskarMNST16,DBLP:journals/corr/NeyshaburSS15,DBLP:journals/corr/NeyshaburBMS17}. By contrast, \accmethod{} does not follow the gradient at all - instead of pursuing the locally \emph{greedy} direction that SGD seeks, we follow an eigenvector of the Hessian which allows us to directly control for curvature. 
\citet{wang2019solving} also adjusts the solutions found when training neural networks by taking advantage of the Hessian to stay on `ridges'. However, they focus on minimax optimization such as in the GAN \cite{NIPS2014_5423} setting. This difference in motivation leads them to an algorithm that looks like Gradient Descent Ascent but with a correction term using the Hessian that keeps the minimax problem from veering into problematic areas with respect to convergence. This is different from our approach, which moves in the direction of the Hessian's eigenvectors instead of gradients. Eigen Vector Descent (EVD, \cite{evd}) proposes to update neural network parameters in the direction of an eigenvector of the Hessian, by doing line search on each individual eigenvector at each step and taking the best value. This can be seen as a myopic version of our method, which greedily updates a single policy. By contrast, we do deep exploration of each eigenvector and maintain a set of candidate solutions. 

Finally, there are also other optimization approaches that do not rely on gradients such as BFGS and Quasi-Newton. They efficiently optimize towards the nearest solution, sometimes by using the Hessian, but do not do what we are proposing of using the eigenvectors as directions to follow. Rather, they use variants of line search with constraints that include the Hessian.

Motivation for \accmethod{} stems from a number of different directions. One is in games and specifically the zero-shot coordination setting. \citet{hu2020other} presents an algorithm that achieves strong results on the Hanabi benchmark \cite{DBLP:journals/corr/abs-1902-00506} including human-AI coordination, but require problem specific knowledge of the game's symmetries. These symmetries correspond to the arbitrary labelings of the
particular state/action space that leave trajectories unchanged up to those labelings. In contrast, \accmethod{} discovers these symmetries automatically by exploiting the connection between equivalent solutions and repeated Eigenvalues, as we demonstrate empirically in Sec~\ref{sec:experiments}.


Another motivating direction is avoiding the `shortcut' solutions often found by DNNs. These are often subpar for downstream tasks~\cite{Geirhos2020a,NIPS2018_7982,benaim2020speednet,Doersch_2015,Han_2019}. \citet{DBLP:journals/corr/abs-1811-12231} tries to address this concern by adjusting the dataset to be less amenable to the texture shortcut and more reliant on shape. In the causal paradigm, \citet{arjovsky2019invariant} assumes access to differing environments and then adjusts the SGD loss landscape to account for all environments simultaneously via a gradient norm penalty. We differ by looking for structured solutions by following the curvature of the Hessian.

In RL, many approaches make use of augmented loss functions to aid exploration, which are subsequently optimized with SGD. These include having a term boosting diversity with respect to other agents \cite{Lehman08exploitingopen, lehmannovelty, novelty, diversitydriven,  parkerholder2020effective, Mouret2015IlluminatingSS, qdnature} or measuring `surprise' in the environment \cite{mohamed2017, eysenbach2018diversity, hashexplore, pathak_og, Raileanu2020RIDE, burda2018exploration}. Rather than shifting or augmenting the loss landscape in this way, we gain diversity through structured exploration with eigenvectors of the Hessian. Finally, maximum entropy objectives are also a popular way to boost exploration in RL \cite{sac, eysenbach2019maxent}. However, this is typically combined with SGD rather than used as an initialization for \accmethod{} as we propose.

\newpage
\section{Additional Experimental Results}
\label{sec:expdetails}

In this section we include some addiitonal results from the diversity in RL experiments.

\textbf{Intuition and Ablations} To illustrate the structured exploration of \accmethod{}, we include a visualization of the optimization path from the MIS. On the left hand side of Fig~\ref{fig:rr_rl_ablation} we show the path along two ridges for the tree shown earlier alongside the main results (Fig~\ref{fig:tree_results}). Areas of high reward are in dark blue. The ridges (green and light blue) both correspond to distinct positive solutions. The policies have six dimensions, but we project them into two dimensions by forming a basis with the two ridges.
Observe that the two ridges are initially orthogonal, following the $(x,y)$ axes. Conversely, we also show two runs of GD from the same initialization, each of which find the same solution and are indistinguishable in parameter space.

\vspace{-2mm}
\begin{figure}[H]
        \begin{minipage}{0.99\textwidth}
         \includegraphics[width=0.3\linewidth]{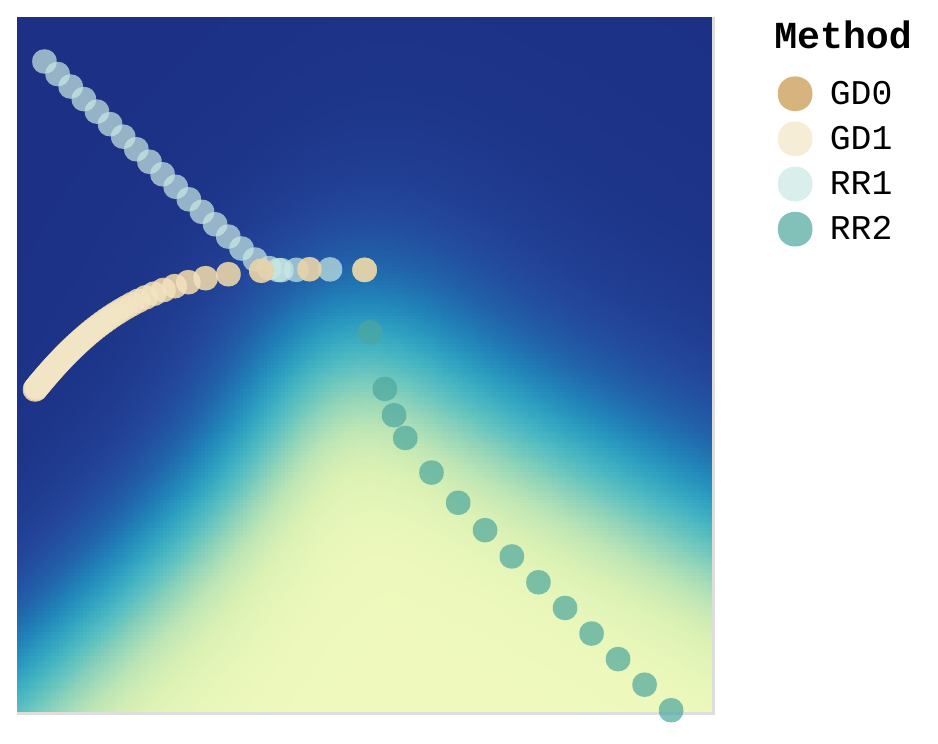} 
     \includegraphics[width=0.7\linewidth]{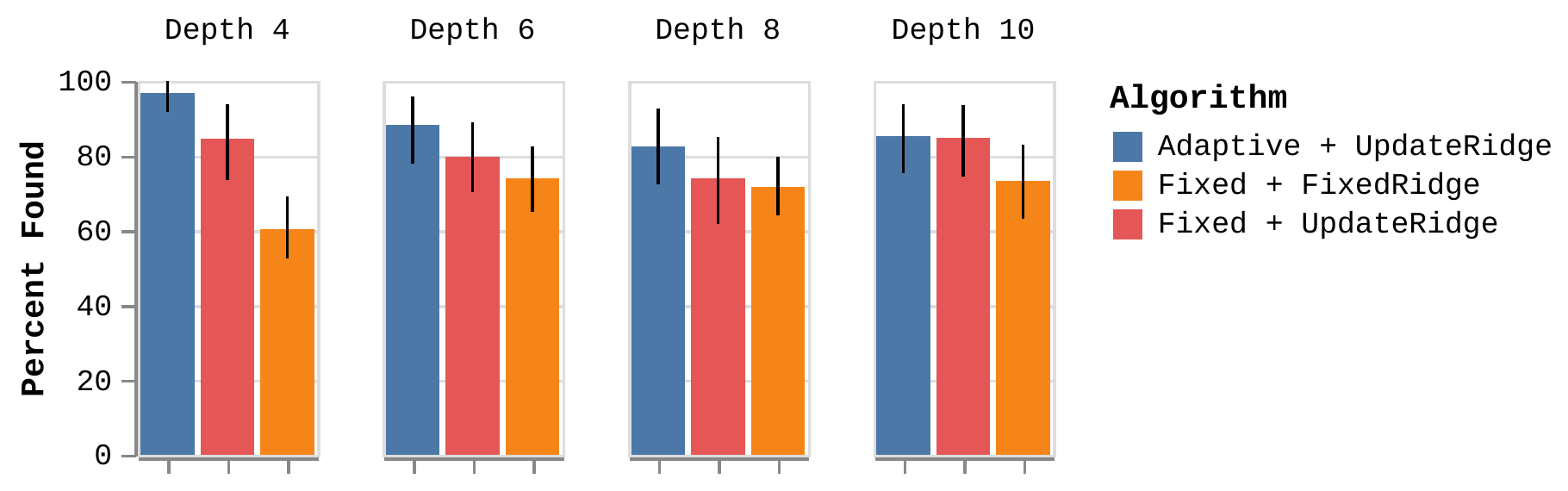}
        \end{minipage}
        \caption{\textbf{Left}: Example optimization paths. Dark is higher reward. All methods start at the same saddle. The deterministic GDs follow the same trajectory while the \method{}s follow different paths. \textbf{Right}: The percentage of solutions found per algorithm, collated by tree depth. Trees at each depth are randomly generated $20$ times to produce error estimates shown.}
        \label{fig:rr_rl_ablation}
\end{figure}
\vspace{-4mm}

We also must consider whether we need to adapt $\alpha$, the size of the step along the ridge. In larger scale settings this may not be possible, so we now consider the two following ablations:
\begin{enumerate}
    \item \textit{Fixed + UpdateRidge}: Here we continue to update the ridge, but use a fixed $\alpha$.
    \item \textit{Fixed + FixedRidge}: We not only keep a fixed $\alpha$ but also do not update the ridge. Thus, we take repeated small steps along the original ridge until we meet the $\mathrm{EndRide}$ condition.
\end{enumerate}

In both cases we use $\alpha=0.1$, the same value used for the Gradient Descent and Random Vector baselines in Fig. \ref{fig:tree_results}. As we see in Fig. \ref{fig:rr_rl_ablation}, as we impose greater rigidity, the performance declines. However, the results for the fixed ridge are still stronger than any of the baselines from Fig. \ref{fig:tree_results}, showing the power of the original ridges in finding diverse solutions. 

\textbf{Beyond Exact RL} Next we open the door to scaling our method to the deep RL paradigm. For on-policy methods we can compute $\mathcal{H}$ accurately via the DiCE operator \cite{dice, loaded_dice}. As a first demonstration of this, we consider the same tree problem shown in Fig. \ref{fig:tree_results}, with the same tabular/linear policy. However, instead of computing the loss as an expectation with full knowledge of the tree, we sample trajectories by taking stochastic actions of the policy in the environment. We use the same maximum budget of $T=10^5$, but using $100$ samples to compute the loss function. For \accmethod{} we alternate between updating the policy by stepping along the ridge and updating the value function (with SGD), while for the baseline we optimize a joint objective. Both \method{} and the baseline use the Loaded DiCE \cite{loaded_dice} loss function for the policy. For \method{}, we use the adaptive $\alpha$ from the exact setting. The results are shown in Fig. \ref{fig:dice_rl}.

\begin{wrapfigure}{r}{0.45\textwidth} 
        \vspace{-9mm}
        \begin{minipage}{0.45\textwidth}
        \centering{\includegraphics[width=0.9\linewidth]{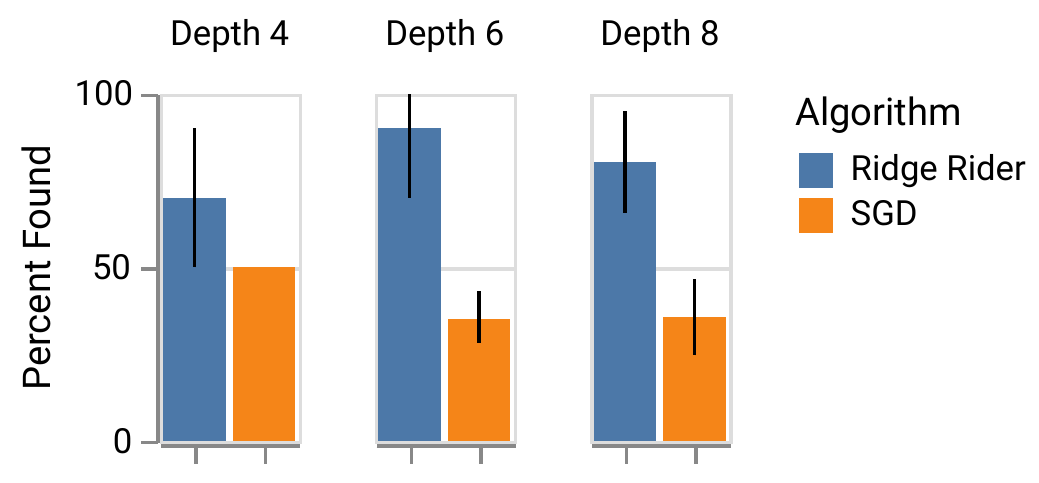}} 
        \end{minipage}
        \caption{The percentage of solutions found per algorithm, collated by tree depth. Trees at each depth are randomly generated $10$ times to produce error estimates shown.}
        \label{fig:dice_rl}
        \vspace{-5mm}
\end{wrapfigure}

For tree depths of four and six, \method{} significantly outperforms the vanilla actor critic baseline, with gains also coming on larger trees. It is important to caveat that these results remain only indicative of larger scale performance, given the toy nature of the problem. However, it does show it is possible to follow ridges when the Hessian is computed with samples, thus, we believe it demonstrates the \emph{potential} to scale \accmethod{} to the problems we ultimately wish to solve. G

\newpage
\section{Implementation Details}
\label{sec:exp_details}
In the following subsections, we provide implementation details along with pseudo code for the approximate version of \method{} (used in MNIST experiments) and the extensions to Zero-Shot coordination and Colored MNIST, respectively.

\subsection{Approximate \accmethod{}}
\begin{algorithm}[H]
\SetAlgoLined
\scriptsize
\caption{Scalable \accmethod{}}
\begin{algorithmic}[1]

\STATE \textbf{Input:} $N$ number of ridges; $R$ maximum index of ridge considered; $T$ max iterations per ridge; $S$ inner steps; LR$_{e/\lambda}$, learning rate for EVec and EVal 
\STATE \textbf{Initialize:} Sample neural network $\theta\sim N(0,\epsilon)$ small value, i.e. near saddle.
\STATE // Find a ridge. 
\STATE $e, \lambda, r =\mathrm{GetRidges}(\theta)$: Sample random integer $r \in [1, R]$ and mini-batch ($\bm{x}, \bm{y}$),
\STATE then use the power method to find the $r$-th most negative Eval $\lambda$ and its EVec ${e}$.
\STATE Optionally: Compute gradient $\bm{g} = \frac{\partial }{\partial {\theta}} L\left(\bm{y}, f_{{\theta}}(\bm{x})\right)$ and set ridge ${e} = \text{sign}({e} \cdot \bm{g}) {e}$.
\STATE archive $= [\{\theta^{[r]}, e, \lambda \}]$
\FOR{$1$  to  N}
   \STATE ${\theta^\Psi, \lambda, e} =$ archive.pop(0) // $\mathrm{ChooseFromArchive}$ is trivial - there is only one entry
  \STATE // Follow the ridge. 
  \WHILE{True}
    \STATE ${\theta}_{old} = {\theta^\Psi}$
    \STATE ${\theta^\Psi} = {\theta} - lr_{{\theta}}~ {e} $
    \STATE $\lambda = \lambda +  {e}^T (\mathcal{H}({\theta^\Psi})- \mathcal{H}({\theta}_{old})) {e} $
    \STATE $\mathrm{UpdateRidge}$ // Update eigenvalue and eigenvector.
    \STATE Sample a mini-batch $\bm{x}, \bm{y}$.
    \STATE // Starting from $\lambda$,$e$ use gradient descent to obtain updated values:
    \FOR {$1$ to S}
    \STATE $\mathcal{L}(e, \lambda | \theta^\Psi) =  (\norm{(1/\lambda) \mathcal{H}({\theta^\Psi}){e} / \norm{{e}}_2 - {e} / \norm{{e}}_2}_2^2)$
      \STATE $\lambda = \lambda -  \text{LR}_{\lambda} \frac{\partial }{\partial \lambda}\mathcal{L}(e, \lambda | \theta^\Psi)$
      \STATE $e = e -  \text{LR}_e \frac{\partial }{\partial e} \mathcal{L}(e, \lambda | \theta^\Psi)$
    \ENDFOR
    \IF{$\mathrm{EndRide}(\theta^\Psi, e, \lambda)$}
    \STATE \textbf{break}
    \ENDIF
  \ENDWHILE
  \STATE $e, \lambda, r = \mathrm{GetRidges}(\theta^\Psi)$
  \STATE archive =$ [\{\theta^{\Psi.\text{append}(r)}, e, \lambda \}]$
\ENDFOR
\STATE \textbf{return} $\theta^{\Psi}$
\end{algorithmic}
\label{algorithm:approximaterr}
\end{algorithm}

\subsection{Multi-Agent Zero-Shot Coordination}
\begin{algorithm}[H]
\scriptsize
\SetAlgoLined
\caption{Zero-Shot Coordination}
\begin{algorithmic}[1]

\STATE \textbf{Input:} $N$ independent runs; List $\mathbf{\Psi}=\{\Psi_1, ... \Psi_N \}$ fingerprints considered; $T$ max iterations per ridge; Learning rate $\alpha$.
\STATE \textbf{Initialize:} Array solutions = [][], best\_score = -1.
\FOR{$i \in 1$  to  N} 
  \STATE \textbf{Initialize:} Policy $\theta \sim N(0,\epsilon)$. \\
  // Get the Maximally Invariant Saddle (min gradient norm, max entropy).
  \STATE MIS $= \mathrm{GetMIS}(\theta)$ 
  \STATE $s = \mathrm{RidgeRiding}(\text{MIS},\mathbf{\Psi}$, T, $\alpha$) //Run exact version of \method{}\\
  // For each $\Psi$, select $\theta$ with highest reward in self-play. Note, multiple $\theta$ per $\Psi$ correspond to $\pm$ EVec directions. 
  \FOR{$k \in 1$ to $\text{len}(\mathbf{\Psi})$} 
      \STATE  solutions$[i][k] = \text{argmax}_{\theta~\text{s.t.}~\theta^{\Psi == \mathbf{\Psi}[k]}} J(\theta)$
    \ENDFOR
  \ENDFOR
\FOR{$k  \in 1$ to $\text{len}(\mathbf{\Psi})$} 
\STATE \textbf{Initialize:} average\_score = 0 // Average cross-play score for fingerprint $\Psi$
\FOR{$i  \in  \{1, \ldots, N \}$} 
\FOR{$j  \in  \{1, \ldots, N \}$} 
\STATE average\_score $\pluseq J(\text{solutions}[i][k]_1, \text{solutions}[j][k]_2)/N^2$  
\ENDFOR
\ENDFOR
\IF{average\_score > best\_score  }
\STATE best\_score = average\_score
\STATE $\theta^*$ = \text{solutions}[0][k]
\ENDIF
\ENDFOR
\STATE \textbf{return} $\theta^*$
\end{algorithmic}
\label{algorithm:zero-shot}
\end{algorithm}

\subsection{Colored MNIST}
\label{sec:ood}

\begin{algorithm}[H]
\scriptsize
\SetAlgoLined
\caption{Colored MNIST}
\begin{algorithmic}[1]

\STATE \textbf{Input:} Training Environments $\xi_1$, $\xi_2$; Inner steps S; Ridge steps N; Pre-Training Steps H; Loss hyperparameters $\mathbf{\beta}$; Number of featurizer weights $f$; Learning rate for featurizer and EVec LR$_{f/x}$; Learning rate for \method{} $\alpha$; Learning rates for finding MIS in pre-training $\mathbf{\gamma}$.
\STATE \textbf{Initialize:} Neural network $\theta \sim N(0, \epsilon)^n$ to small random values, ie. near saddle; candidate common EVec $e_r \sim N(0, \epsilon)^r$ where $r = n - f$. \\
// Split weights into featurizer and \method{} space.
\STATE $\theta_f =\theta[0:f]$
\STATE $\theta_r =\theta[f: n ]$
 \STATE // Find the Maximally Invariant Saddle and initial common EVec.
   \FOR{$1$  to  H} 
   \STATE $\mathcal{L}_{1}(\theta_{f}, e_r | \theta_r) =\sum_{i \in 1,2} \big( -\beta_1 \mathrm{C}(\mathcal{H}^i_r e_r, e_r) - \beta_2 e_r \mathcal{H}^i_r e_r + \beta_3 {L}_{\xi_i}(\theta_{f}| \theta_r)\big) + \frac{|e_r ( \mathcal{H}^1_r - \mathcal{H}^2_r) e_r   |}{|e_r|^2}$
   \STATE $\mathcal{L}_{2}(\theta_{f} | \theta_r) =\mathbb{E}_{u_r\sim\Theta_r} \beta_4 \frac{|\mathrm{C}(\mathcal{H}^1_r u_r, u_r)^2 - \mathrm{C}(\mathcal{H}^2_r u_r, u_r)^2 |}{\mathrm{C}(\mathcal{H}^1_r u_r, u_r)^2 +\beta_5 \mathrm{C}(\mathcal{H}^1_r u_r, u_r)^2}+ \frac{|u_r ( \mathcal{H}^1_r - \mathcal{H}^2_r) u_r|}{|u_r  \mathcal{H}^1_r u_r| + |u_r \mathcal{H}^2_r  u_r|}$
   \STATE // $\mathcal{H}^{\xi}_r = \nabla^2_\theta \mathcal{L}_{\xi}$ and $C$ is the correlation (normalized dot product)
  \STATE  $\theta_f =\theta_f - \gamma_0  \frac{\partial }{\partial \theta_f} (L_1 + L_2)$  
  \STATE  $e_r = e_r - \gamma_1 \frac{\partial }{\partial e_r} (L_1)$  
  \ENDFOR
  \FOR{$1$ to N}
    \STATE // Follow the ridge.
    \STATE $\theta_r =\theta_r - \alpha x$ \\
    // Update EVec and featurizer.
    \FOR {$1$ to S}
        \STATE  $\theta_f =\theta_f -  \text{LR}_f  \frac{\partial }{\partial \theta_f} (L_1 + L_2)$  
        \STATE  $e_r = e_r -\text{LR}_x \frac{\partial }{\partial e_r} (L_1)$  
    \ENDFOR
  \ENDFOR

\end{algorithmic}
\label{algorithm:cmnistrr}
\end{algorithm}

\begin{figure*}[ht]
    \centering
    \includegraphics[width=0.7\linewidth]{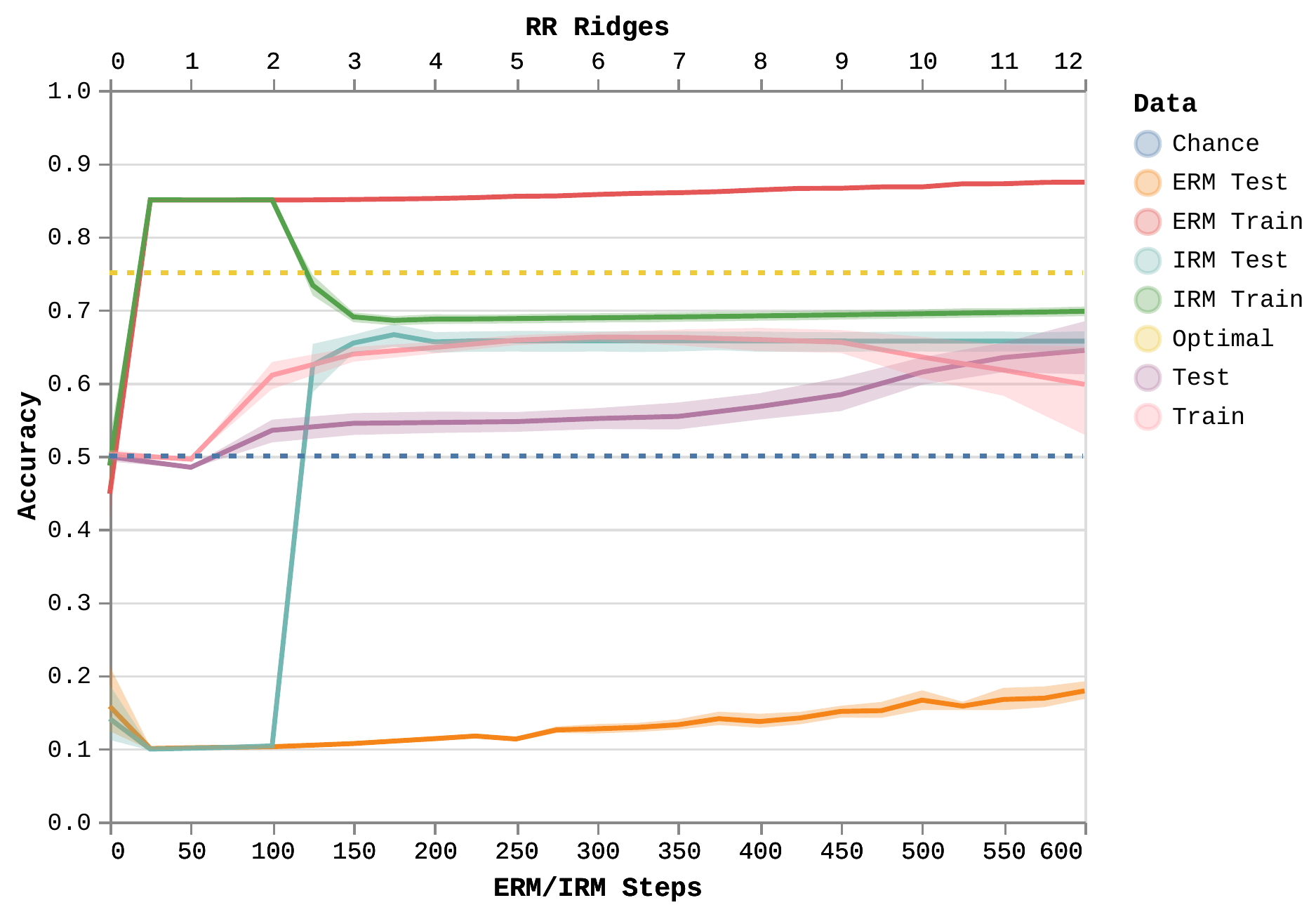}
    \caption{Curves showing the training and test accuracy of IRM (Invariant Risk Minimization), ERM (Empirical Risk Minimization), and \method{} on Colored MNIST. Note that the bottom x-axis is steps for ERM/IRM and the top x-axis is steps for \method{}.}
    \label{fig:colored_mnist}
\end{figure*}

\newpage

\section{Theoretical Results}

\subsection{Behavior of gradient descent near a saddle point}
\label{app:gd_dynamics}
We will illustrate how gradient descent dynamics near a saddle point moves towards the most negative eigenvector of the Hessian via two different derivations. These are not novel results but provided as an illustration of a well known fact.

First, let $\theta_0$ be a saddle point of $\mathcal{L}(\theta)$, and consider $T$ steps of gradient descent $\theta_1, ..., \theta_T$. Let $\mathcal{H}(\theta_0)$ be the Hessian of $\mathcal{L}$ at $\theta_0$. We will use the first-order Taylor expansion, $\nabla_\theta \mathcal{L}(\theta_t) = H (\theta_t - \theta_0) + o(\epsilon^2)$ ignoring the error term to approximate the gradient close to $\theta_0$.

We can decompose $\theta_t - \theta_0$ into the basis of eigenvectors of $\mathcal{H}(\theta_0)$:  $\theta_t - \theta_0 = \sum_i { a_{i,t} e_i(\theta_0)}$ where $\{ e_i(\theta_0) \}$ are the eigenvectors of $\mathcal{H}(\theta_0)$. After one step of gradient descent with learning rate $\alpha$, 

\begin{align}
    \theta_{t+1} & = \theta_0 + \sum_i {a_{i,t} e_i(\theta_0)} - \alpha \sum_i {\lambda_i a_{i,t} e_i(\theta_0)}\\ 
    & = x_0 + \sum_i {(1 - \alpha \lambda_i(\theta_0)) a_{i,t} e_i(\theta_0)}
\end{align}

i.e. $a_{i,t+1} = (1 - \alpha \lambda_i(\theta_0)) a_{i,t}$

It follows by simple induction that if $T$ isn't too large so the displacement $\theta_T  - \theta_0$ is still small, $a_{i,T} = (1 - \alpha \lambda_i)^T$. In other words, the component of $\theta_1-\theta_0$ corresponding to more negative eigenvalues of $\mathcal{H}(\theta_0)$ will be amplified relative to less negative eigenvalues by a ratio that grows exponentially in $T$. 

\vspace{0.5cm}

In the limit of small step sizes, we can also consider the differential limit of approximate (up to first order terms as defined above) gradient descent dynamics $$\frac{d\theta}{dt} = -\alpha \mathcal{H}(\theta_0)\theta,$$ assuming the saddle is at $\theta_0 = 0$ wlog. The solution to this system of equations is $\theta(t) = \theta(0) \exp( -\alpha \mathcal{H}(\theta_0) t )$. If we write the eigendecomposition of $\mathcal{H}(\theta_0)$, $\mathcal{H}(\theta_0) = Q \Lambda Q^{-1}$, then $\exp(-\mathcal{H}(\theta_0)) = - Q \exp(\Lambda) Q^{-1}$. So if $\theta(0) = \sum_i a_i e_i(\theta_0)$, then $\theta(t)= \sum_i e^{-\alpha \lambda_i(\theta_0) t} a_i e_i(\theta_0)$.

\subsection{Structural properties of the eigenvalues and eigenvectors of Smooth functions}\label{section::structural_eigen}

\begin{definition}
We say a function $f:\mathbb{R}^d \rightarrow \mathbb{R}$ is $\beta-$smooth if for all pairs of points $\theta, \theta' \in \mathbb{R}^d$:
\begin{equation*}
    | \nabla_\theta^2 f(\theta) -  \nabla_\theta^2 f(\theta') | \leq \beta \| \theta - \theta'\|
\end{equation*}

\end{definition}

We show that for $\beta-$smooth functions $f$, the $i-$th eigenvalue function:

\begin{equation}
    \lambda_i(\theta) = i-\text{th largest eigenvalue of }\nabla_\theta^2 f(\theta)
\end{equation}

And (an appropriate definition of) the $i-$th eigenvector function:

\begin{equation}
    e_i(\theta) = i-\text{th largest normalized eigenvector of }\nabla_\theta^2 f(\theta)
\end{equation}

are continuous functions of $\theta$. 

\begin{lemma}\label{lemma::beta_smooth_cont_eigenval}
If $f$ is $\beta-$smooth` $\lambda_i(\theta)$ is continuous. 
\end{lemma}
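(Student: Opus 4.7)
The plan is to reduce continuity of $\lambda_i(\theta)$ to a standard matrix-perturbation bound, then invoke $\beta$-smoothness to translate from perturbations of the Hessian to perturbations of $\theta$. Since $f$ is smooth, the Hessian $\mathcal{H}(\theta) = \nabla_\theta^2 f(\theta)$ is a symmetric matrix for every $\theta$, so its eigenvalues are real and can be unambiguously sorted as $\lambda_1(\theta) \geq \lambda_2(\theta) \geq \cdots \geq \lambda_d(\theta)$.

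The core tool is Weyl's inequality for symmetric matrices: for any two symmetric $A, B \in \mathbb{R}^{d \times d}$, and for every index $i$,
\begin{equation*}
|\lambda_i(A) - \lambda_i(B)| \leq \|A - B\|_{op}.
\end{equation*}
This is the standard statement that ordered eigenvalues are $1$-Lipschitz with respect to the operator norm; I would cite it directly rather than reprove it (its usual proof uses the Courant--Fischer min-max characterization, but that is well outside what we need to develop here).

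Applying this with $A = \mathcal{H}(\theta)$ and $B = \mathcal{H}(\theta')$ gives
\begin{equation*}
|\lambda_i(\theta) - \lambda_i(\theta')| \leq \|\mathcal{H}(\theta) - \mathcal{H}(\theta')\|_{op}.
\end{equation*}
The $\beta$-smoothness assumption, interpreting $|\cdot|$ in the definition as the operator norm, then yields
\begin{equation*}
|\lambda_i(\theta) - \lambda_i(\theta')| \leq \beta \|\theta - \theta'\|.
\end{equation*}
This is Lipschitz continuity of $\lambda_i$ with constant $\beta$, which in particular implies continuity.

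There is no real obstacle in this argument; the only subtle point worth being explicit about is that the labeling $\lambda_i$ refers to the $i$-th \emph{ordered} eigenvalue, so Weyl's inequality applies directly and we do not need to track any matching between eigenvalues across different $\theta$'s. (If we were instead proving continuity of $e_i(\theta)$, we would need to handle eigengap issues and branch choices carefully, which is why the statement here is cleanly separated into two lemmas.)
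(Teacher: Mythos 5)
Your proof is correct and follows essentially the same route as the paper: both arguments reduce the claim to the fact that ordered eigenvalues of symmetric matrices are $1$-Lipschitz in the operator norm and then apply $\beta$-smoothness of the Hessian, the only difference being that you cite Weyl's inequality while the paper inlines its standard proof via the variational (Courant--Fischer) characterization for the case $i=1$. The paper likewise records the resulting $\beta$-Lipschitz bound on $\lambda_i$ (its Proposition on eigenvalue Lipschitzness), so your stronger conclusion matches what the paper actually derives.
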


\begin{proof}
We show the result for the case $i=1$, the proof for $i \neq 1$ follows the same structure, albeit  making use of the more complex variational characterization of the $i-$th eigenvalue / eigenvector pair. Recall the variational formulation of $\lambda_1$:

\begin{equation}\label{equation::variational_characterization_eigenvalue}
    \lambda_1(\theta) = \max_{v \in \mathcal{S}_d(1) } v^\top \nabla_\theta^2 f(\theta) v 
\end{equation}

Let $\theta' = \theta + \Delta_\theta$. It is enough to show that:

\begin{equation*}
    \lim_{\Delta_\theta \rightarrow 0} \lambda_1(\theta') = \lambda_1(\theta)
\end{equation*}

Let $v_1 $ be a unit vector achieving the max in Equation \ref{equation::variational_characterization_eigenvalue} and let $v_1'$ be the maximizer for the corresponding variational equation for $\theta'$, then:

\begin{align*}
    |\lambda_1(\theta) -  v_1^\top \nabla^2_\theta f(\theta') v_1 |    &=  |v_1^\top \left( \nabla_\theta^2 f(\theta)- \nabla^2_\theta f(\theta')  \right) v_1 |\\
    &\stackrel{(i)}{\leq}\beta   \|\Delta_\theta   \|
\end{align*}

Inequality $(i)$ follows by $\beta-$smoothness. Similarly:

\begin{equation*}
    | \lambda_1(\theta') - (v_1') \nabla_\theta^2 f(\theta) v_1' | \leq \beta \| \Delta_\theta\|
\end{equation*}
Since by definition:
\begin{equation*}
\lambda_1(\theta) \geq (v_1')^\top \nabla_\theta^2 f(\theta) v_1'
\end{equation*}
And:
\begin{equation*}
    \lambda_1(\theta') \geq v_1^\top \nabla_\theta^2 f(\theta') v_1
\end{equation*}
We conclude that:
\begin{equation*}
    \lambda_1(\theta') \geq \lambda_1(\theta) - \beta\| \Delta_\theta\|
\end{equation*}
And:
\begin{equation*}
\lambda_1(\theta) \geq \lambda_1(\theta') - \beta\| \Delta_\theta\|
\end{equation*}
Consequently:
\begin{equation*}
    \lambda_1(\theta) + \beta \| \Delta_\theta \| \geq \lambda_1(\theta') \geq \lambda_1(\theta) - \beta \| \Delta_\theta\|
\end{equation*}
The result follows by taking the limit as $\Delta_\theta \rightarrow 0$.
\end{proof}

In fact the proof above shows even more:

\begin{proposition}\label{proposition::eigenvalue_lipschitz}
If $L$ is $\beta-$smooth and $\theta, \theta' \in \mathbb{R}^d$ then the eigenvalue function is $\beta$-Lipschitz: 
\begin{equation*}
    | \lambda_i(\theta') - \lambda_i(\theta) | \leq \beta \| \theta - \theta'\|
\end{equation*}
And therefore:
\begin{equation*}
    \| \nabla_\theta \lambda_i(\theta) \| \leq \beta 
\end{equation*}
\end{proposition}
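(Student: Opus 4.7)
The plan is to leverage the proof of Lemma~\ref{lemma::beta_smooth_cont_eigenval}, which already establishes the Lipschitz bound as a by-product in the case $i=1$. The two-sided chain derived there,
\begin{equation*}
\lambda_1(\theta) - \beta\|\Delta_\theta\| \;\leq\; \lambda_1(\theta') \;\leq\; \lambda_1(\theta) + \beta\|\Delta_\theta\|,
\end{equation*}
is exactly the Lipschitz inequality for the top eigenvalue. So for $i=1$ nothing new needs to be done; I would simply point back to that derivation.

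To extend to arbitrary $i$, I would invoke the Courant--Fischer min--max characterization
\begin{equation*}
\lambda_i(\theta) \;=\; \max_{\substack{S \subseteq \mathbb{R}^d\\ \dim S = i}}\; \min_{\substack{v \in S\\ \|v\|=1}}\; v^\top \nabla_\theta^2 L(\theta)\, v,
\end{equation*}
and mirror the comparison argument from the lemma. For any subspace $S$ of dimension $i$ and any unit $v \in S$, $\beta$-smoothness yields
\begin{equation*}
\bigl|\,v^\top \bigl(\nabla_\theta^2 L(\theta) - \nabla_\theta^2 L(\theta')\bigr)\, v\,\bigr| \;\leq\; \bigl\|\nabla_\theta^2 L(\theta) - \nabla_\theta^2 L(\theta')\bigr\|_{\mathrm{op}} \;\leq\; \beta\|\theta-\theta'\|.
\end{equation*}
Since this bound is uniform in $S$ and $v$, performing the max-min on both sides transfers the perturbation to the optimal values, giving $|\lambda_i(\theta) - \lambda_i(\theta')| \leq \beta\|\theta-\theta'\|$ for every $i$. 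This is essentially Weyl's inequality applied to the symmetric Hessians at $\theta$ and $\theta'$.

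For the gradient bound, I would use the standard fact that wherever $\lambda_i$ is differentiable,
\begin{equation*}
|\langle \nabla_\theta \lambda_i(\theta), u\rangle| \;=\; \lim_{h\to 0}\frac{|\lambda_i(\theta+hu) - \lambda_i(\theta)|}{h} \;\leq\; \beta\|u\|,
\end{equation*}
so choosing $u = \nabla_\theta \lambda_i(\theta)/\|\nabla_\theta \lambda_i(\theta)\|$ gives $\|\nabla_\theta \lambda_i(\theta)\| \leq \beta$. The only subtlety worth flagging, and the main obstacle to a fully rigorous statement, is that $\lambda_i$ need not be differentiable everywhere: at points where eigenvalues cross, $\lambda_i$ can fail to be smooth (only the \emph{unordered} spectrum varies analytically). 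The clean reading is that the gradient bound holds almost everywhere, which by Rademacher's theorem is where a Lipschitz function is differentiable, or equivalently it holds for any element of the Clarke subdifferential. This is a presentational issue rather than a substantive obstacle.
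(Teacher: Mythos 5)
Your proposal is correct and follows essentially the same route as the paper: the paper's proof also reduces to the two-sided bound from Lemma~\ref{lemma::beta_smooth_cont_eigenval} via the variational (min--max) characterization of $\lambda_i$, with the gradient bound read off from the Lipschitz property. Your explicit Courant--Fischer argument for general $i$ and the caveat about non-differentiability at eigenvalue crossings are both sound and, if anything, slightly more careful than the paper's one-line appeal to "the same sequence of steps."
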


\begin{proof}
    With the exact same sequence of steps as in the proof of Lemma \ref{lemma::beta_smooth_cont_eigenval}, we conclude:
    \begin{equation*}
         \lambda_i(\theta) + \beta \| \Delta_\theta \| \geq \lambda_i(\theta') \geq \lambda_i(\theta) - \beta \| \Delta_\theta\|
    \end{equation*}
    And:
        \begin{equation*}
         \lambda_i(\theta') + \beta \| \Delta_\theta \| \geq \lambda_i(\theta) \geq \lambda_i(\theta') - \beta \| \Delta_\theta\|
    \end{equation*}
The result follows. The gradient bound is an immediate consequence of the Lipschitz property of $\lambda_1(\cdot)$. 
\end{proof}

Let's define a canonical $i-$th eigenvector path $\ell : [ 0, \infty ) \rightarrow \mathbb{R}$ starting at $\theta$ as follows:
\begin{align*}
    \ell(0 ) &= \theta\\
    \frac{ \partial \ell(t)}{\partial t} &=  \begin{cases}
    e_i(\theta) & \text{if }\lim_{l \rightarrow t} \langle e_i(\theta), \frac{\partial \ell(l)}{\partial l}\ell(l) \rangle > 0\\
    -e_i(\theta) &\text{o.w.}
    \end{cases}
\end{align*}

We proceed to show that the curve traced by $\ell$ is continuous. 

\begin{lemma}\label{lemma::lipschitzness_eigenvector}
Let $\theta$ be such that $\lambda_{i-1}(\theta) - \lambda_{i}(\theta) = \Delta_{i-1} > 0$ and $\lambda_{i}(\theta) - \lambda_{i+1}(\theta) = \Delta_{i} > 0$. Then:
\begin{equation}
\min( \| e_i( \theta) - e_i(\theta') \|,  \| e_i( \theta) + e_i(\theta') \| ) \leq \sqrt{\frac{ 4\beta \| \theta - \theta'\|}{\min(\Delta_i, \Delta_{i-1})}}
\end{equation}
 For all $\theta'$ such that $\| \theta' - \theta\| \leq \frac{\min( \Delta_{i-1}, \Delta_{i}) }{4\beta}$
\end{lemma}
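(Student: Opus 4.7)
The plan is to prove this via a classical Davis--Kahan style perturbation argument, leveraging the eigenvalue Lipschitz bound from Proposition~\ref{proposition::eigenvalue_lipschitz} together with $\beta$-smoothness applied at the matrix level.

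First I would set $\Delta := \min(\Delta_{i-1}, \Delta_i)$ and translate $\beta$-smoothness into a norm bound on the Hessian perturbation: $\|\mathcal{H}(\theta') - \mathcal{H}(\theta)\|_{op} \leq \beta \|\theta - \theta'\|$, which is immediate from the definition. Next, I would use Proposition~\ref{proposition::eigenvalue_lipschitz} and the hypothesis $\|\theta - \theta'\| \leq \Delta/(4\beta)$ to show $|\lambda_i(\theta') - \lambda_i(\theta)| \leq \Delta/4$, and conclude that for every $j \neq i$,
\begin{equation*}
|\lambda_j(\theta) - \lambda_i(\theta')| \geq \Delta - \Delta/4 = (3/4)\Delta,
\end{equation*}
by separately considering $j \leq i-1$ (where $\lambda_j(\theta) \geq \lambda_{i-1}(\theta)$) and $j \geq i+1$ (where $\lambda_j(\theta) \leq \lambda_{i+1}(\theta)$).

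The core step is the expansion of $e_i(\theta')$ in the orthonormal eigenbasis $\{e_j(\theta)\}_{j=1}^d$ of $\mathcal{H}(\theta)$. Writing $e_i(\theta') = \sum_j c_j e_j(\theta)$ with $\sum_j c_j^2 = 1$, and using the eigenvalue equation $\mathcal{H}(\theta') e_i(\theta') = \lambda_i(\theta') e_i(\theta')$, I would take inner products with $e_j(\theta)$ for $j \neq i$ to derive
\begin{equation*}
(\lambda_i(\theta') - \lambda_j(\theta))\, c_j = e_j(\theta)^\top (\mathcal{H}(\theta') - \mathcal{H}(\theta)) e_i(\theta').
\end{equation*}
Since the right-hand sides are the coordinates of $(\mathcal{H}(\theta')-\mathcal{H}(\theta))e_i(\theta')$ in an orthonormal basis, summing squares gives $\sum_{j \neq i} (\lambda_i(\theta')-\lambda_j(\theta))^2 c_j^2 \leq \|\mathcal{H}(\theta')-\mathcal{H}(\theta)\|_{op}^2 \leq \beta^2 \|\theta - \theta'\|^2$. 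Combined with the gap lower bound $(3/4)\Delta$, this yields $\sum_{j \neq i} c_j^2 \leq \frac{16 \beta^2 \|\theta-\theta'\|^2}{9 \Delta^2}$.

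Finally I would convert this overlap bound into the stated distance bound. Because $\sum_j c_j^2 = 1$, one has $\|e_i(\theta) \pm e_i(\theta')\|^2 = 2(1 \pm c_i)$, and so
\begin{equation*}
\min\bigl(\|e_i(\theta) - e_i(\theta')\|^2,\, \|e_i(\theta) + e_i(\theta')\|^2\bigr) = 2(1-|c_i|) \leq 2(1-c_i^2) = 2\sum_{j\neq i} c_j^2.
\end{equation*}
Plugging in the above bound and using $\|\theta-\theta'\| \leq \Delta/(4\beta)$ once to replace one factor of $\|\theta-\theta'\|/\Delta$ by $1/(4\beta)$, I get $2\sum_{j\neq i} c_j^2 \leq \frac{8\beta\|\theta-\theta'\|}{9\Delta} \leq \frac{4\beta\|\theta-\theta'\|}{\Delta}$, and taking square roots finishes the proof. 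The only mild obstacle is bookkeeping the sign ambiguity of eigenvectors (handled automatically by the $\min(\|\cdot - \cdot\|, \|\cdot + \cdot\|)$ on the left-hand side) and handling the boundary indices $i = 1$ and $i = d$, which is resolved by the conventions $\lambda_0 = +\infty$ (already used in the paper) and $\lambda_{d+1} = -\infty$.
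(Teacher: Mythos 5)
Your proof is correct, and it takes a genuinely different route from the paper's. The paper expands $e_1(\theta)$ in the eigenbasis of $\mathcal{H}(\theta')$ and sandwiches the Rayleigh quotient $e_1(\theta)^\top \mathcal{H}(\theta') e_1(\theta)$ between a lower bound coming from smoothness and an upper bound coming from the spectral decomposition and the gap; crucially, it only carries this out for $i=1$, deferring general $i$ to "the more complex variational characterization" without writing it down. You instead expand $e_i(\theta')$ in the eigenbasis of $\mathcal{H}(\theta)$ and use the resolvent-type identity $(\lambda_i(\theta') - \lambda_j(\theta))c_j = e_j(\theta)^\top(\mathcal{H}(\theta')-\mathcal{H}(\theta))e_i(\theta')$, summing squares Davis--Kahan style. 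This buys you three things: it treats arbitrary $i$ uniformly with no appeal to min--max characterizations (so it actually closes the gap the paper leaves open); it handles the sign ambiguity cleanly via $2(1-|c_i|) \leq 2\sum_{j\neq i}c_j^2$, matching the $\min$ in the statement (the paper instead fixes signs WLOG); and it yields a slightly sharper constant ($\tfrac{8}{9}\beta\|\theta-\theta'\|/\Delta$ before relaxing to the stated $4\beta\|\theta-\theta'\|/\Delta$). All intermediate steps check out: the gap bound $|\lambda_j(\theta)-\lambda_i(\theta')| \geq \tfrac{3}{4}\Delta$ follows from Proposition~\ref{proposition::eigenvalue_lipschitz} exactly as you argue, and the Parseval step is valid since the quantities $(\lambda_i(\theta')-\lambda_j(\theta))c_j$ are coordinates of $(\mathcal{H}(\theta')-\mathcal{H}(\theta))e_i(\theta')$ in an orthonormal basis.
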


\begin{proof}
By Proposition \ref{proposition::eigenvalue_lipschitz}, for all $\theta'$ such that $\| \theta' - \theta\| \leq \frac{\min( \Delta_{i-1}, \Delta_{i}) }{4\beta}$:
\begin{equation}\label{equation::eigenvalue_gaps_distinct}
\max(     |\lambda_i(\theta') - \lambda_i(\theta) |  ,  |\lambda_{i-1}(\theta') - \lambda_{i-1}(\theta) |,  |\lambda_{i+1}(\theta') - \lambda_{i+1}(\theta) | )\leq \beta \|\theta' - \theta\| 
\end{equation}
In other words, it follows that:
\begin{equation*}
    \lambda_{i+1}(\theta) , \lambda_{i+1}(\theta') < \lambda_i(\theta), \lambda_i(\theta') < \lambda_{i-1}(\theta), \lambda_{i-1}(\theta')
\end{equation*}
Where this sequence of inequalities implies for example that $\lambda_{i+1}(\theta')  < \lambda_i(\theta)$. 


Let $H = \nabla_\theta^2 f(\theta)$ and $H'= \nabla_\theta^2 f(\theta')$. Let $\Delta = H - H'$. By $\beta-$smoothness we know that $\| \Delta \| \leq \beta \| \theta - \theta'\|$.

Again for simplicity we restrict ourselves to the case $i=1$. The argument for $i \neq 1$ uses the same basic ingredients, but takes into account the more complex variational characterization of the $i-$th eigenvector. 

W.l.o.g. define $e_1(\theta)$ and $e_1(\theta')$ such that $\langle e_1(\theta), e_1(\theta') \rangle = \alpha \geq 0$. We can write $e_1(\theta') = \alpha e_1(\theta) + \left(\sqrt{1-\alpha^2 }\right)v$ with $\| v\|_2 = 1$ and $\langle v, e_1(\theta) \rangle= 0$. Notice that $\| e_1(\theta) - e_1(\theta') \| = \| (1-\alpha) e_1(\theta)   - \left(\sqrt{1-\alpha^2 }\right)v \| \leq (1-\alpha) + \sqrt{ 1-\alpha^2}$. We now show $\alpha$ is close to $1$. Recall:

\begin{equation*}
    e_1(\theta) = \arg\max_{v \in \mathbb{S}_d} v^\top H v
\end{equation*}
and
\begin{equation*}
    e_1(\theta') = \arg\max_{v' \in \mathbb{S}_d} (v')^\top H' v'
\end{equation*}
Equation \ref{equation::eigenvalue_gaps_distinct} implies $\lambda_1(\theta) \geq \lambda_2(\theta')  + \Delta_1 - \beta \| \theta - \theta'\|$ and $\lambda_1(\theta') \geq \lambda_2(\theta)  + \Delta_1 - \beta \| \theta - \theta'\|$. The following inequalities hold:

\begin{align}
\lambda_1(\theta') &\geq e_1(\theta)^\top    H' e_1(\theta) \\
&= e_1(\theta)^\top [ H - \Delta] e_1(\theta) \notag\\
&= e_1(\theta)^\top  H e_1(\theta)  - e_1(\theta)^\top\Delta e_1(\theta) \notag \\
&= \lambda_1(\theta) - e_1(\theta)^\top \Delta e_1(\theta) \notag\\
&\geq \lambda_1(\theta) - \| \Delta \| \notag\\
&\geq \lambda_1(\theta) - \beta \| \theta -\theta'\| \label{equation::lower_bound_1}
\end{align}

Write $e_1(\theta) = \sum_{i} \alpha_i e_i(\theta')$ with $\sum_i \alpha_i^2 = 1$. Notice that:
\begin{equation*}
    e_1(\theta)^\top H' e_1(\theta) = \sum_i \alpha_i^2 \lambda_i(\theta')
\end{equation*}
Since $\lambda_i(\theta') < \lambda_1(\theta) - \frac{3\Delta_1}{4}$ for all $i > 1$:
\begin{align}
    \sum_i \alpha_i^2 \lambda_i(\theta') &\leq \alpha_1^2 \lambda_1(\theta') +\left( \sum_{i=2}^d \alpha_i^2\right) \left(\lambda_1(\theta) - \Delta_1 + \beta \| \theta - \theta' \| \right) \notag\\
    &\leq \alpha_1^2 (\lambda_1(\theta) +\beta \|\theta - \theta'\|) + (\sum_{i=2}^d \alpha_i^2) (\lambda_1(\theta) - \Delta_1 + \beta \| \theta - \theta' \|) \notag \\
    &\leq \lambda_1(\theta) + \beta\| \theta - \theta'\| - \Delta_1 (1-\alpha_1^2) \label{equation::upper_bound_1}
\end{align}
And therefore, combining Equation \ref{equation::lower_bound_1} and \ref{equation::upper_bound_1}:
\begin{equation*}
    \lambda_1(\theta) - \beta\| \theta - \theta'\| \leq e_1(\theta)^\top H' e_1(\theta) \leq \lambda_1(\theta) + \beta\| \theta - \theta'\| - \Delta_1 (1-\alpha_1^2)
\end{equation*}
Therefore:
\begin{equation*}
    \alpha_1^2 \geq \frac{\Delta_1 - 2\beta\|\theta - \theta'\|  }{\Delta_1} = 1 - \frac{2\beta \| \theta - \theta'\|}{\Delta_1}
\end{equation*}
This in turn implies that $\sum_{i=2}^d \alpha_i^2 \leq \frac{2\beta \| \theta - \theta'\|}{\Delta_1}$ and that $1-\alpha \leq 1-\alpha^2 \leq \frac{2\beta \| \theta - \theta'\|}{\Delta_1} $. Therefore:
\begin{align*}
\| e_1(\theta) - e_1(\theta') \|^2  &= (1-\alpha_1)^2 + \sum_{i=2}^d \alpha_2^2  \\
&\leq (1-\alpha_1)^2 + \frac{2\beta \| \theta - \theta'\|}{\Delta_1} \\
&\leq \frac{ 4\beta^2 \| \theta - \theta'\|^2 }{\Delta^2_1} + \frac{2\beta \| \theta - \theta'\|}{\Delta_1}\\
&\leq \frac{ 4\beta \| \theta - \theta'\|}{\Delta_1}
 \end{align*}
The result follows.

\end{proof}

As a direct implication of Lemma \ref{lemma::lipschitzness_eigenvector}, we conclude the eigenvector function is continuous.


\subsection{Convergence rates for finding a new eigenvector, eigenvalue pair}\label{section::convergence_eigenvalue_eigenvector}

Let $\theta' = \theta +\Delta_\theta$, ridge riding minimizes the following loss w.r.t $e$ and $\lambda$ to find a candidate $e'$ and $\lambda'$:

\begin{equation}
    L(e, \lambda;\theta') = \| (1/\lambda) \mathcal{H}(\theta') e / \| e\| - e/\|e\|\|^2
\end{equation}

Notice that:
\begin{align*}
 L(e, \lambda; \theta') &= \frac{1}{ \lambda^2 \| e \|^2} e^\top \mathcal{H}(\theta')^2 e + 1 - 2\frac{1}{\lambda \| e \|^2 } e^\top \mathcal{H}(\theta') e 
\end{align*}

Therefore:
\begin{align*}
    \nabla_e L(e, \lambda; \theta') &=  \frac{1}{\lambda^2}\nabla_{e} \left(   \frac{1}{ \| e \|^2} e^\top \mathcal{H}(\theta')^2 e   \right) -2\frac{1}{\lambda}\nabla_e \left(\frac{1}{\| e \|^2} e^\top \mathcal{H}(\theta') e  \right)\\
    &=  \frac{2}{\lambda^2\| e\|}\left(  \mathcal{H}^2(\theta') - \tilde{e}^\top \mathcal{H}^2(\theta') \tilde{e} I \right) \tilde{e} - \frac{4}{\lambda \| e \|} \left( \mathcal{H}(\theta') - \tilde{e}^\top \mathcal{H}(\theta') \tilde{e} I  \right) \tilde{e} \\
    &= \left( \frac{2}{\lambda^2 \| e \|} \mathcal{H}^2(\theta') - \frac{4}{\lambda \| e \| } \mathcal{H}(\theta') \right)\tilde{e} + \left(\frac{4}{\lambda \| e \| } \tilde{e}^\top \mathcal{H}(\theta') \tilde{e}I - \frac{2}{\lambda^2 \| e\|} \tilde{e}^\top \mathcal{H}^2(\theta') \tilde{e}I     \right)\tilde{e} 
\end{align*}

Where $\tilde{e} = \frac{e}{\| e \|}$. 

Now let's compute the following gradient:
\begin{equation*}
\nabla_\lambda L(e, \lambda; \theta') = - \frac{2}{\lambda^{3} \| e\|^2 } e^\top \mathcal{H}(\theta')^2 e + \frac{2}{\lambda^2 \|e\|^2 } e^\top \mathcal{H}(\theta') e = \frac{2}{\lambda^2 \| e\|} \left(  e^\top \mathcal{H}(\theta') e - \frac{e^\top \mathcal{H}(\theta')^2 e}{\lambda }     \right)
\end{equation*}

We consider the following algorithm:

\begin{enumerate}
    \item Start at $(e, \lambda)$.
    \item Take a gradient step $e \rightarrow e - \alpha_e \nabla_e L(e, \lambda; \theta')$.
    \item Take a gradient step $\lambda \rightarrow  \lambda - \alpha_\lambda \nabla_\lambda L(e, \lambda; \theta')$. 
    \item Normalize $e \rightarrow \frac{e}{\| e\|}$.
\end{enumerate}

It is easy to see that the update for $e$ takes the form:

\begin{align*}
    e &\rightarrow \underbrace{\left( \left(  1 + \alpha_e\left(  \frac{2e^\top \mathcal{H}(\theta')^2 e }{ \lambda^2}  -  \frac{4e^\top \mathcal{H}(\theta') e}{\lambda}\right)    \right) I+ \alpha_e\left( \frac{ 4\mathcal{H}(\theta') }{\lambda}  - \frac{2\mathcal{H}(\theta')^2 }{\lambda^2}\right) \right)}_{U}e \\
    e&\rightarrow \frac{e}{\| e\|}
\end{align*}

Where we think of $U$ as an operator acting on the vector $e$. In fact if we consider $T$ consecutive steps of this algorithm, yielding normalized eigenvector candidates $e_0, \cdots, e_T$ and eigenvalue candidates $\lambda_0, \cdots, \lambda_T$, and name the corresponding $U-$operators as $U_1, \cdots, U_T$ it is easy to see that:
\begin{equation*}
    e_T = \frac{E_T}{\| E_T \|}
\end{equation*}
Where $E_T = \left(\prod_{i=1}^T U_T \right)e_0$. In other words, the normalization steps can be obviated as long as we normalize at the very end. This observation will prove useful in the analysis.

Let's assume $L$ is $\beta-$smooth and let's say we are trying to find the $i-$th eigenvalue eigenvector pair for $\theta'$: $(e_i(\theta'), \lambda_i(\theta'))$. Furthermore let's assume we start our optimizaation at the $(e_i(\theta), \lambda_i(\theta))$ pair. Furthermore, assume that $\theta'$ is such that:

\begin{equation}\label{equation::assumption_on_eigenval_eigenvec}
    \| e_i(\theta) - e_i(\theta')\| \leq \min(\frac{\Delta_i}{4}, \frac{\Delta_{i-1}}{4})  \text{  and  }  \| \lambda_i(\theta) - \lambda_i(\theta')\| \leq \min(\frac{\Delta_i}{4}, \frac{\Delta_{i-1}}{4})  
\end{equation}
Where $\lambda_{i}(\theta) - \lambda_{i}(\theta) = \Delta_{i-1} > 0$ and $\lambda_{i}(\theta) - \lambda_{i+1}(\theta) = \Delta_{i} > 0$. The existence of such $\theta'$ as in \ref{equation::assumption_on_eigenval_eigenvec} can be guaranteed by virtue of Lemmas \ref{proposition::eigenvalue_lipschitz} and \ref{lemma::lipschitzness_eigenvector}.


     

Notice that as long as $\alpha_e = \min(1/4, \Delta_{i}, \Delta_{i-1})$ is small enough the operator $U$ attains the form:
\begin{equation*}
    U = AI + \alpha_e \left( \frac{ 4\mathcal{H}(\theta') }{\lambda}  - \frac{2\mathcal{H}(\theta')^2 }{\lambda^2}\right) 
\end{equation*}

Where $\alpha_e / A $ is small.  

Notice that the operator $\left( \frac{ 4\mathcal{H}(\theta') }{\lambda}  - \frac{2\mathcal{H}(\theta')^2 }{\lambda^2}\right) $ has the following properties:

\begin{enumerate}
    \item $\left( \frac{ 4\mathcal{H}(\theta') }{\lambda}  - \frac{2\mathcal{H}(\theta')^2 }{\lambda^2}\right) $ has the exact same eigenvectors set $\{e_j(\theta') \}_{j=1}^T$ as $\mathcal{H}(\theta')$. 
    \item The eigenvalues of $\left( \frac{ 4\mathcal{H}(\theta') }{\lambda}  - \frac{2\mathcal{H}(\theta')^2 }{\lambda^2}\right) $ equal $\{\frac{4\lambda_j(\theta')}{\lambda} - \frac{2\lambda_j(\theta'^2}{\lambda^2}\}_{j=1}^d$.
\end{enumerate}

Consequently, if $|\lambda - \lambda_i(\theta')| < \min(\frac{\Delta'_i}{4}, \frac{\Delta'_{i-1}}{4}) \ )$ we conclude that the maximum eigenvalue of $\left( \frac{ 4\mathcal{H}(\theta') }{\lambda}  - \frac{2\mathcal{H}(\theta')^2 }{\lambda^2}\right) $ equals $\frac{4\lambda_i(\theta')}{\lambda} - \frac{2\lambda_i(\theta')^2}{\lambda^2}$ with eigenvector $e_i(\theta')$. 

Furthermore, the eigen-gap between the maximum eigenvalue and any other one is lower bounded by $\frac{\min(\Delta_i, \Delta_{i-1})}{2}$. Therefore, after taking a gradient step on $e$, the dot product $\langle e_i(\theta'), e_t \rangle = \gamma_t$ satsifies $\gamma^2_{t+1} \rightarrow  \gamma^2_t + \gamma^2_t *(1-\alpha^2_e\frac{\min(\Delta_i, \Delta_{i-1}^2)}{4} )$ 

If $|\lambda_t - \lambda_i(\theta')  |< \min(\frac{\Delta'_i}{4}, \frac{\Delta'_{i-1}}{4}) \ ) $, and the eigenvalue update satisfied the properties above, then $\lambda_{t+1}$ is closer to $\lambda_i(\theta')$ than $\lambda_t$, thus maintaining the invariance. We conclude that the convergence rate is the rate at which $\gamma_t \rightarrow 1$, which is captured by the following theorem:
\begin{theorem}
If $L$ is $\beta-$smooth, $\alpha_e = \min(1/4, \Delta_{i}, \Delta_{i-1})$, and $\| \theta - \theta'\| \leq \frac{\min(1/4, \Delta_{i}, \Delta_{i-1})}{\beta} $ then $|\langle e_t, e_i(\theta') \rangle| \geq 1- \left(1-\frac{\min(1/4, \Delta_{i}, \Delta_{i-1})}{4} \right)^{t} $
\end{theorem}

\subsection{Staying on the ridge}\label{subsection::staying_on_ridge}

In this section, we show that under the right assumptions on the step sizes, Ridge Riding stays along a descent direction. 

We analyze the following setup. Starting at $\theta$, we move along negative eigenvector $e_i(\theta)$ to $\theta' = \theta - \alpha e_i(\theta)$. Once there we move to $\theta'' = \theta' - \alpha e_i(\theta')$. Let $L : \Theta \rightarrow \mathbb{R}$ be the function we are trying to optimize. We show that:

\begin{theorem}
Let $L: \Theta \rightarrow \mathbb{R}$ have $\beta-$smooth Hessian, let $\alpha$ be the step size. If at $\theta$ RR satisfies:
$\langle \nabla L(\theta), e_i(\theta) \rangle \geq  \| \nabla L(\theta) \| \gamma $, and $\alpha \leq \frac{\min(\Delta_i, \Delta_{i-1}) \gamma^2 }{16\beta} $ then after two steps of RR:
\begin{equation*}
L(\theta'') \leq     L(\theta)  - \gamma \alpha\| \nabla L(\theta) \| 
\end{equation*}
\end{theorem}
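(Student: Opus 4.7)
My plan is to apply the standard descent lemma to each of the two steps of \accmethod{} separately, then sum them, using the eigenvector-continuity estimate (Lemma~\ref{lemma::lipschitzness_eigenvector} from Appendix~\ref{section::structural_eigen}) to control how much the relevant inner product can shift between $\theta$ and $\theta'$. First I would invoke the descent lemma, available because $L$ has bounded-Hessian (hence $\beta$-Lipschitz gradient), on each half-step:
\begin{align*}
L(\theta') - L(\theta) &\leq -\alpha\,\langle \nabla L(\theta), e_i(\theta) \rangle + \tfrac{\beta}{2}\alpha^2,\\
L(\theta'') - L(\theta') &\leq -\alpha\,\langle \nabla L(\theta'), e_i(\theta') \rangle + \tfrac{\beta}{2}\alpha^2.
\end{align*}
The first inner product is bounded below by $\gamma\,\|\nabla L(\theta)\|$ directly from the hypothesis, so the real work goes into the second.

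To lower-bound $\langle \nabla L(\theta'), e_i(\theta') \rangle$, I would decompose $\nabla L(\theta') = \nabla L(\theta) + (\nabla L(\theta') - \nabla L(\theta))$ and $e_i(\theta') = e_i(\theta) + (e_i(\theta') - e_i(\theta))$, with the sign of $e_i(\theta')$ chosen to maintain positive overlap with $e_i(\theta)$ (precisely what $\mathrm{UpdateRidge}$ does). Lipschitzness of the gradient gives $\|\nabla L(\theta') - \nabla L(\theta)\| \leq \beta\alpha$, while Lemma~\ref{lemma::lipschitzness_eigenvector} applied with $\|\theta'-\theta\|=\alpha$ yields $\|e_i(\theta') - e_i(\theta)\| \leq 2\sqrt{\beta\alpha/\min(\Delta_i,\Delta_{i-1})}$. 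The hypothesis $\alpha \leq \gamma^{2}\min(\Delta_i,\Delta_{i-1})/(16\beta)$ is chosen exactly so this eigenvector rotation is at most $\gamma/2$. Expanding the inner product and applying Cauchy--Schwarz to the cross-terms should then deliver $\langle \nabla L(\theta'), e_i(\theta') \rangle \geq \tfrac{\gamma}{2}\|\nabla L(\theta)\| - c\,\beta\alpha$ for a small constant $c$.

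Summing the two descent inequalities gives a total decrease of roughly $\tfrac{3\alpha\gamma}{2}\|\nabla L(\theta)\| - O(\beta\alpha^2)$. The extra $\tfrac{\alpha\gamma}{2}\|\nabla L(\theta)\|$ margin beyond the claimed $\alpha\gamma\|\nabla L(\theta)\|$ is what absorbs the two quadratic error terms, which the step-size budget was engineered to dominate.

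The main obstacle is the careful bookkeeping in the second paragraph: one has to verify that the precondition of Lemma~\ref{lemma::lipschitzness_eigenvector} holds (so that the $i$-th eigenvalue remains isolated from its neighbors over the step), handle the sign ambiguity of the eigenvector consistently with how $\mathrm{UpdateRidge}$ selects branches, and ensure each $O(\beta\alpha)$ and $O(\beta\alpha^2)$ perturbation is strictly dominated by the $\gamma\|\nabla L(\theta)\|$ progress along the ridge. The constant $16$ in the denominator of the step-size bound is precisely what makes all of these error terms fit under the budget simultaneously.
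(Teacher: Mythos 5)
Your overall architecture matches the paper's proof in Appendix~\ref{subsection::staying_on_ridge}: expand $L$ over each of the two steps, reduce the second inner product $\langle \nabla L(\theta'), e_i(\theta')\rangle$ to the first by controlling the gradient drift ($\le \beta\alpha$) and the eigenvector rotation via Lemma~\ref{lemma::lipschitzness_eigenvector}, and use the step-size condition to force that rotation below $\gamma/2$ (your bound $2\sqrt{\beta\alpha/\min(\Delta_i,\Delta_{i-1})}\le\gamma/2$ is exactly the inequality the paper uses). Up to that point your bookkeeping is sound.

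The gap is in the final absorption step. Your first-order descent-lemma formulation leaves strictly positive error terms of size $\Theta(\beta\alpha^2)$ --- one $\tfrac{\beta}{2}\alpha^2$ per step plus another $\beta\alpha^2$ from the gradient drift --- and you claim the surplus $\tfrac{\gamma\alpha}{2}\|\nabla L(\theta)\|$ absorbs them. That requires $\beta\alpha \lesssim \gamma\|\nabla L(\theta)\|$, but the hypothesis $\alpha\le \min(\Delta_i,\Delta_{i-1})\gamma^2/(16\beta)$ says nothing about $\|\nabla L(\theta)\|$, which is exactly the quantity that is small in the regime where RR operates (just off a saddle); so the step-size budget was \emph{not} engineered to dominate these terms and the conclusion does not follow from your inequalities. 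The paper avoids this by keeping the exact second-order Taylor term $\tfrac{\alpha^2}{2}\,e_i(\theta)^\top\mathcal{H}(\theta)e_i(\theta)=\tfrac{\alpha^2\lambda_i(\theta)}{2}$, which is \emph{negative} along a ridge (RR only follows negative-curvature directions), so the quadratic contributions help rather than hurt and only cubic $O(\beta\alpha^3)$ remainders survive --- these come from the Lipschitz-Hessian reading of $\beta$-smoothness, which is also the hypothesis Lemma~\ref{lemma::lipschitzness_eigenvector} needs, whereas your descent lemma uses the bounded-Hessian reading. To repair your argument, replace the generic descent lemma with the second-order expansion and invoke $\lambda_i(\theta),\lambda_i(\theta')\le 0$ to discard the quadratic terms.
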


\begin{proof}

Since $L$ is $\beta-$smooth, the third order derivatives of $L$ are uniformly bounded. Let's write $L(\theta')$ using a Taylor expansion:
\begin{align*}
    L(\theta') &= L(\theta - \alpha e_i(\theta))\\
    &\stackrel{(i)}{\leq} L(\theta) + \langle \nabla L(\theta), -\alpha e_i(\theta) \rangle + \frac{1}{2}(-\alpha e_i(\theta)^\top \mathcal{H}(\theta) (-\alpha e_i(\theta)) + c' \alpha^3  \beta \\
   &= L(\theta) - \alpha \langle \nabla L(\theta), e_i(\theta) \rangle+ \alpha^2 \frac{\lambda_i(\theta)}{2} + c' \alpha^3 \beta
\end{align*}
Inequality $(i)$ follows by Hessian smoothness. 

Let's expand $L(\theta'')$:
\begin{align*}
    L(\theta'') &= L(\theta' - \alpha e_i(\theta') ) \\
        &\leq L(\theta')  - \alpha \langle \nabla L(\theta'), e_i(\theta') \rangle + \frac{\alpha^2}{2} \lambda_i(\theta') + c''\alpha^3 \beta \\
        &\leq L(\theta)  - \alpha \langle  \nabla L(\theta), e_i(\theta) \rangle- \alpha \langle \nabla L(\theta'), e_i(\theta') \rangle + \frac{\alpha^2\lambda_i(\theta)}{2}  +\frac{\alpha^2 \lambda_i(\theta')}{2}  + (c'+c'')\alpha^3 \beta
\end{align*}
Notice that for any $v\in \mathbb{R}^d$ it follows that $\langle \nabla L(\theta') , v\rangle  \leq  \langle \nabla L(\theta), v\rangle - \alpha v^\top  \nabla^2 L(\theta) e_i(\theta)  + c''' \beta \alpha^2 = \langle \nabla L(\theta), v\rangle - \alpha \lambda_i(\theta) v^\top  e_i(\theta)  + c''' \beta \alpha^2$. Plugging this in the sequence of inequalities above:
\begin{align}
    L(\theta'') &\leq L(\theta) - \alpha \langle \nabla L(\theta) , e_i(\theta) \rangle - \alpha \langle \nabla L(\theta'), e_i(\theta') \rangle + \frac{\alpha^2\lambda_i(\theta)}{2}  +\frac{\alpha^2 \lambda_i(\theta')}{2}  + (c'+c'')\alpha^3 \beta \notag\\
    &\leq L(\theta)  - \alpha \langle \nabla L(\theta), e_i(\theta) \rangle- \alpha \langle \nabla L(\theta), e_i(\theta') \rangle +\frac{3\alpha^2 \lambda_i(\theta) + \alpha^2 \lambda_i(\theta')}{2} + (c' + c'' + c''')\alpha^3 \beta\notag \\
    &= L(\theta)  - 2\alpha \langle \nabla L(\theta), e_i(\theta) \rangle+ \alpha \langle \nabla L(\theta), e_i(\theta) - e_i(\theta') \rangle  +\frac{3\alpha^2 \lambda_i(\theta) + \alpha^2 \lambda_i(\theta')}{2} + (c' + c'' + c''')\alpha^3 \beta\notag\\
    &\leq L(\theta)  - 2\alpha \langle \nabla L(\theta), e_i(\theta) \rangle + \alpha \langle \nabla L(\theta), e_i(\theta) - e_i(\theta') \rangle  +\frac{3\alpha^2 \lambda_i(\theta) + \alpha^2 \lambda_i(\theta')}{2} + (c' + c'' + c''')\alpha^3 \beta\notag\\
    &\stackrel{(i)}{\leq} L(\theta)  - 2\alpha \langle \nabla L(\theta), e_i(\theta) \rangle + \alpha \| \nabla L(\theta\| \sqrt{\frac{4\beta\| \theta - \theta' \|}{\min(\Delta_i, \Delta_{i-1})}}  +\frac{3\alpha^2 \lambda_i(\theta) + \alpha^2 \lambda_i(\theta')}{2} + (c' + c'' + c''')\alpha^3 \beta \label{equation::upper_bound_Lprimeprime}
\end{align}
Where inequality $(i)$ follows from Cauchy-Schwarz and Lemma \ref{lemma::lipschitzness_eigenvector} since:

$ \| e_i( \theta) - e_i(\theta') \|\leq \sqrt{\frac{ 4\beta \| \theta - \theta'\|}{\min(\Delta_i, \Delta_{i-1})}}  = \sqrt{\frac{ 4\beta \alpha}{\min(\Delta_i, \Delta_{i-1})}} $

Recall that by assumption $\langle \nabla L(\theta), e_i(\theta) \rangle \geq  \| \nabla L(\theta) \| \gamma $ and $\gamma \in (0,1)$ and that $\alpha \leq \frac{\min(\Delta_i, \Delta_{i-1}) \gamma^2 }{16\beta} $. Applying this to inequality \ref{equation::upper_bound_Lprimeprime} :

\begin{align*}
    L(\theta'') &\leq L(\theta)  - 2\alpha \langle \nabla L(\theta), e_i(\theta) \rangle + \alpha \| \nabla L(\theta\| \sqrt{\frac{4\beta\alpha}{\min(\Delta_i, \Delta_{i-1})}}  +\frac{3\alpha^2 \lambda_i(\theta) + \alpha^2 \lambda_i(\theta')}{2} + (c' + c'' + c''')\alpha^3 \beta \\
    &\leq L(\theta)  - 2\alpha \langle \nabla L(\theta), e_i(\theta) \rangle + \frac{ \gamma \alpha}{2} \| \nabla L(\theta\| +\frac{3\alpha^2 \lambda_i(\theta) + \alpha^2 \lambda_i(\theta')}{2} + (c' + c'' + c''')\alpha^3 \beta \\
    &\leq  L(\theta)  - \gamma \alpha \| \nabla L(\theta) \| +\frac{3\alpha^2 \lambda_i(\theta) + \alpha^2 \lambda_i(\theta')}{2} + (c' + c'' + c''')\alpha^3 \beta\\
    &\leq  L(\theta)  - \gamma \alpha\| \nabla L(\theta) \| 
\end{align*}

The last inequality follows because term  $\frac{3\alpha^2 \lambda_i(\theta) + \alpha^2 \lambda_i(\theta')}{2}  \leq 0$ and of order less than the third degree terms at the end.

\end{proof}

\subsection{Behavior of RR near a saddle point}








The discussion in this section is intended to be informal and has deliberately been written in this way. First, let $\theta_0$ be a saddle point of $\mathcal{L}(\theta)$, and consider the steps of RR $\theta_1, ..., \theta_t, \cdots$. Let $H$ be the Hessian of $\mathcal{L}$ at $\theta_0$. We will start by using the first-order Taylor expansion, $\nabla_\theta \mathcal{L}(\theta_t) = \mathcal{H}(\theta_{t-1}) (\theta_t - \theta_{{t-1}}) + o(\epsilon^2)$ ignoring the error term to approximate the gradient close to $\theta_{t-1}$.

We will see that $\nabla_\theta \mathcal{L}(\theta_t) =  \alpha \sum_{l=0}^{t-1} \lambda_i(\theta_l) e_i(\theta_l) + o(t\epsilon^2)$ for all $t$. We proceed by induction. Notice that for $t = 1$, this is true since $\nabla_\theta \mathcal{L}(\theta_1) =\mathcal{ H}(\theta_0)(\theta_1-\theta_0) + o(\epsilon^2) = \alpha \lambda_i(\theta_0)) e_i(\theta_0) + o(\epsilon^2)  $ for some lower order error term $\epsilon$. 

Now suppose that for some $t\geq 1$ we have $\nabla_\theta \mathcal{L}(\theta_t) =  \alpha \sum_{l=0}^{t-1} \lambda_i(\theta_l) e_i(\theta_l) + o(t\epsilon^2)$, this holds for $t=0$. By a simple Taylor expansion around $\theta_t$:
\begin{align*}
    \nabla_\theta \mathcal{L}(\theta_{t+1}) &= \nabla_\theta \mathcal{L}(\theta_{t}) +  \mathcal{H}( \theta_t ) (\theta_{t+1} - \theta_t) + o(\epsilon^2) \\
    &\stackrel{(i)}{=} \alpha \sum_{l=0}^{t-1} \lambda_i(\theta_l) e_i(\theta_l) + o(t\epsilon^2)  +  \mathcal{H}( \theta_t ) (\theta_{t+1} - \theta_t) + o(\epsilon^2)\\
    &= \alpha \sum_{l=0}^{t} \lambda_i(\theta_l) e_i(\theta_l) + o((t+1)\epsilon^2)  
\end{align*}
Equality $(i)$ follows from the inductive assumption. The last inequality follows because by definition $\mathcal{H}( \theta_t ) (\theta_{t+1} - \theta_t) = \alpha \lambda_i(\theta_t) e_i(\theta_t)$. The result follows.  

\subsection{Symmetries lead to repeated eigenvalues}\label{subsection::repeated_eigenvalues}

Let $\mathcal{L} : \R^d \to \R^d$ be a twice-differentiable loss function and write $[n] = \{1, \ldots, n\}$ for $n \in \mathbb{N}$. For any permutation $\phi \in S_d$ (the symmetric group on $d$ elements), consider the group action
$$ \phi(\th_1, \ldots, \th_d) = \left(\th_{\phi(1)}, \ldots, \th_{\phi(d)}\right) $$
and abuse notation by also writing $\phi : \R^d \to \R^d$ for the corresponding linear map. For any $N, m \in \mathbb{N}$, define the group of permutations
$$ \Phi_N^m = \left\{ \prod_{i=1}^{m} (i,i+mk) \mid k\in [N-1] \right\}\footnote{For $m=1$ we have $\Phi_N^1 = \{ (1,2), \ldots, (1,N) \}$, which together generate all $N!$ permutations on $N$ elements. For larger $m$, $\Phi_N^m$ also generates $N!$ permutations, but the $m$ elements within each set are tied to each other. For instance,
$ \Phi_3^2 = \{ (1, 3)(2, 4), (1, 5)(2,6) \} $
which together generate
$ \{ (1), (1, 3)(2, 4), (1, 5)(2,6), (3,5)(4,6), (1, 3, 5)(2, 4, 6), (1,5,3)(2,6,4) \} \,. $} \,. $$
Now assume there are $N$ non-overlapping sets
$$ \{\theta_{k_i^1}, \ldots, \theta_{k_i^m} \} $$
of $m$ parameters each, with $i \in [N]$, which we can reindex (by reordering parameters) to
$$  \{\theta_{1+m(i-1)}, \ldots, \theta_{mi}\} $$
for convenience. Assume the loss function is invariant under all permutations of these $N$ sets, namely, $\mathcal{L} \circ \phi = \mathcal{L}$ for all $\phi \in \Phi_N^m$. Our main result is that such parameter symmetries reduce the number of distinct Hessian eigenvalues, cutting down the number of directions to explore by ridge riding.

\begin{theorem}
Assume that for some $N,m$ we have $\mathcal{L}\circ\phi = \mathcal{L}$ and $\phi(\th) = \th$ for all $\phi \in \Phi_N^m$. Then $\nabla^2 \mathcal{L}(\th)$ has at most $d-m(N-2)$ distinct eigenvalues.
\end{theorem}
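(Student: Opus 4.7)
My first step would be to note that each $\phi \in \Phi_N^m$ is a linear map with orthogonal permutation matrix $P_\phi$, so twice differentiating the identity $\mathcal{L}(\phi(\theta')) = \mathcal{L}(\theta')$ in $\theta'$ yields $P_\phi^{\top} \nabla^2 \mathcal{L}(\phi(\theta')) P_\phi = \nabla^2 \mathcal{L}(\theta')$. Evaluating at $\theta' = \theta$, where $\phi(\theta) = \theta$ by hypothesis, gives $P_\phi^{\top} H P_\phi = H$, i.e., $H := \nabla^2\mathcal{L}(\theta)$ commutes with every $P_\phi$. Writing $G := \langle \Phi_N^m \rangle$, the $N-1$ generators all swap block $1$ with a distinct other block as simultaneous $m$-fold transpositions, and transpositions sharing a common element generate the full symmetric group; hence $G \cong S_N$ and $H$ commutes with an $S_N$-action on $\mathbb{R}^d$.

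Next I would decompose $\mathbb{R}^d$ as an $S_N$-representation. The $d - Nm$ coordinates outside the permuted blocks are fixed pointwise, giving a trivial subrepresentation of dimension $d - Nm$. Within the $Nm$ permuted coordinates, grouping by position $j \in \{1,\dots,m\}$ inside a block, each $j$ contributes an independent copy of the natural permutation representation of $S_N$ on $\mathbb{R}^N$, which splits canonically as $\mathrm{triv} \oplus \mathrm{std}$ with the $(N-1)$-dimensional standard summand $\{v \in \mathbb{R}^N : \sum_i v_i = 0\}$ absolutely irreducible. Collecting contributions, $\mathbb{R}^d$ has trivial isotypic component of dimension $d - (N-1)m$ and standard isotypic component of dimension $m(N-1)$.

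The final step is to apply Schur's lemma. Because $H$ is symmetric and $G$-equivariant, it preserves the isotypic decomposition, and on each isotypic it acts as $M \otimes I_r$ for a symmetric matrix $M$ on the $k$-dimensional multiplicity space, where $r$ is the dimension of the underlying irrep. Therefore $H$ contributes at most $d - (N-1)m$ distinct eigenvalues on the trivial isotypic (where $r=1$) and at most $m$ distinct eigenvalues on the standard isotypic, each with multiplicity at least $N-1$. Summing yields at most $(d - (N-1)m) + m = d - m(N-2)$ distinct eigenvalues overall, which is the claim.

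The only delicate point is invoking Schur's lemma in the correct form over $\mathbb{R}$: this works cleanly because the standard rep of $S_N$ is absolutely irreducible, so the commutant on $m$ copies is exactly $\mathrm{Mat}_m(\mathbb{R})$ and symmetric commuting operators correspond to symmetric $m \times m$ matrices, whose eigenvalues appear in $H$ with multiplicity a multiple of $N-1$. Everything else — verifying that $\Phi_N^m$ generates $S_N$ and that the natural rep decomposes as $\mathrm{triv} \oplus \mathrm{std}$ — is routine, so I do not anticipate a substantive obstacle beyond careful bookkeeping of multiplicities.
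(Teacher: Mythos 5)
Your proof is correct, but it takes a genuinely different route from the paper's. You pass to the group $G\cong S_N$ generated by $\Phi_N^m$, decompose $\mathbb{R}^d$ into its trivial isotypic component (dimension $d-m(N-1)$) and the standard isotypic component $\mathrm{std}^{\oplus m}$ (dimension $m(N-1)$), and invoke Schur's lemma to see that the symmetric equivariant operator $H$ acts on the latter as $M\otimes I_{N-1}$ for a symmetric $m\times m$ matrix $M$, giving at most $m$ distinct eigenvalues there, each with multiplicity a multiple of $N-1$. The paper instead argues elementarily: it first shows that $(\phi(v),\lambda)$ is an eigenpair whenever $(v,\lambda)$ is (your commutation identity $P_\phi^\top H P_\phi = H$ in disguise), then splits eigenvalues into ``trivial'' (all eigenvectors fixed by every $\phi$) and ``non-trivial''; for a non-trivial eigenvalue it exhibits $N-1$ explicit linearly independent eigenvectors as the generator-orbit of $u=v-\phi(v)$, bounds the trivial eigenvectors by the $d-m(N-1)$-dimensional fixed subspace via a degrees-of-freedom count, and finishes with a counting contradiction. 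The two arguments establish the same two key facts (fixed subspace of dimension $d-m(N-1)$; at most $m$ non-trivial eigenvalues, each of multiplicity at least $N-1$). Your version is shorter granted standard representation theory, yields the sharper structural conclusion that the non-trivial spectrum is exactly that of a symmetric $m\times m$ matrix, and generalizes immediately to other symmetry groups; the paper's version is self-contained and uses nothing beyond linear algebra. The one point you flag yourself --- absolute irreducibility of the standard representation over $\mathbb{R}$, needed so the commutant of $\mathrm{std}^{\oplus m}$ is exactly $\mathrm{Mat}_m(\mathbb{R})\otimes I_{N-1}$ --- is indeed the only place requiring care, and it holds, so there is no gap.
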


We first simplify notation and prove a few lemmata.

\begin{definition}
Write $\Phi = \Phi_N^m$ and $H = \nabla^2 \mathcal{L}(\th)$. We define an eigenvector $v$ of $H$ to be \emph{trivial} if $\phi(v) = v$ for all $\phi \in \Phi$. We call an eigenvalue \emph{trivial} if all corresponding eigenvectors are trivial.
\end{definition}

\begin{lemma}
Assume $\mathcal{L} \circ \phi = \mathcal{L}$ for some $\phi \in \Phi$ and $\phi(\th) = \th$. If $(v, \la)$ is an eigenpair of $H$ then so is $(\phi(v), \la)$.
\end{lemma}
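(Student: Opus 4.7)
The plan is to show that the permutation map $\phi$, viewed as a linear map $\mathbb{R}^d \to \mathbb{R}^d$, commutes with the Hessian $H$ at the fixed point $\theta$, from which the result follows immediately by applying $\phi$ to both sides of $Hv = \lambda v$.

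First I would differentiate the identity $\mathcal{L}(\phi(x)) = \mathcal{L}(x)$ twice using the chain rule. Since $\phi$ is linear with matrix representation $P_\phi$ (a permutation matrix), the standard composition rule for Hessians gives
\begin{equation*}
\nabla^2(\mathcal{L}\circ\phi)(x) \;=\; P_\phi^\top \, \nabla^2 \mathcal{L}(\phi(x)) \, P_\phi \;=\; \nabla^2 \mathcal{L}(x)
\end{equation*}
for every $x$. Evaluating at $x = \theta$ and using the assumption $\phi(\theta) = \theta$, this specializes to $P_\phi^\top H P_\phi = H$. Because permutation matrices are orthogonal ($P_\phi^\top = P_\phi^{-1}$), this rearranges to $H P_\phi = P_\phi H$, i.e.\ $\phi$ commutes with $H$.

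Given this commutation relation, the conclusion is routine: if $Hv = \lambda v$, then
\begin{equation*}
H\,\phi(v) \;=\; H P_\phi v \;=\; P_\phi H v \;=\; P_\phi (\lambda v) \;=\; \lambda\, \phi(v),
\end{equation*}
so $(\phi(v), \lambda)$ is an eigenpair of $H$, as claimed.

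The main (and only) obstacle is bookkeeping around the two roles played by $\phi$: as a permutation of indices on the parameter tuple $\theta$, and as the corresponding linear map $P_\phi$ on $\mathbb{R}^d$. Once one identifies $\phi$ with $P_\phi$ and notes orthogonality, the argument reduces to the standard observation that operators commuting with a symmetric matrix preserve its eigenspaces. No smoothness beyond twice-differentiability is needed, and the hypothesis $\phi(\theta)=\theta$ is used precisely to ensure the Hessian-transformation identity becomes a statement about $H$ alone rather than about $H$ at two different points.
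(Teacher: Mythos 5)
Your proposal is correct and follows essentially the same route as the paper: differentiate the invariance $\mathcal{L}\circ\phi=\mathcal{L}$ twice, use linearity and orthogonality of the permutation matrix to obtain $P_\phi^\top H P_\phi = H$ at the fixed point $\theta$, and then push $\phi$ through the eigenvalue equation. The paper writes the final step as $H\phi(v) = (D\phi\, H\, D\phi^\top)(D\phi\, v) = \lambda\,\phi(v)$ rather than isolating the commutation relation $HP_\phi = P_\phi H$ explicitly, but the argument is the same.
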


\begin{proof}
First notice that $D\phi$ (also written $\nabla \phi$) is constant by linearity of $\phi$, and orthogonal since
\begin{align*}
(D\phi^T D\phi)_{ij} = \sum_k D\phi_{ki}D\phi_{kj} = \sum_k \delta_{\phi(k)i}\delta_{\phi(k)j} = \delta_{ij}\sum_k\delta_{\phi(k)i} = \delta_{ij} = I_{ij} \,.
\end{align*}
Now applying the chain rule to $\mathcal{L} = \mathcal{L}\circ \phi$ we have
$$ D\mathcal{L} = D\mathcal{L}|_\phi \circ D\phi $$
and applying the product rule and chain rule again,
$$ D^2 \mathcal{L} = D(D\mathcal{L}|_\phi \circ D\phi) = D\phi^T D^2 \mathcal{L}|_\phi D\phi + 0 $$
since $D^2 \phi = 0$. If $\phi(\th) = \th$ then we obtain
$$ H = D\phi^T H D\phi \,, \quad \text{or equivalently,} \quad H = D\phi H D\phi^T $$
by orthogonality of $D\phi$. Now notice that $\phi$ acts linearly as a matrix-vector product
$$ \phi(v) = D\phi \cdot v \,, $$
so any eigenpair $(v, \la)$ of $H$ must induce
$$ H \phi(v) = (D\phi H D\phi^T)(D\phi v) = D\phi H v = D\phi \la v = \la D\phi v = \la \phi(v) $$
as required.
\end{proof}

\begin{lemma}
Assume $v$ is a non-trivial eigenvector of $H$ with eigenvalue $\la$. Then $\la$ has multiplicity at least $N-1$.
\end{lemma}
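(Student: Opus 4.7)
The plan rests on the invariance of $E_\lambda := \ker(H - \lambda I)$ under the action of the group $G = \langle \Phi_N^m \rangle$. The preceding lemma already shows $\phi(E_\lambda) \subseteq E_\lambda$ for each $\phi \in \Phi_N^m$; since the action is linear, this extends to all of $G$. Moreover, the $N-1$ block-swaps comprising $\Phi_N^m$ together generate the full symmetric group on the $N$ blocks, so $G \cong S_N$. Hence $E_\lambda$ inherits the structure of a finite-dimensional $S_N$-representation, and the problem reduces to: any $G$-invariant subspace containing a vector not fixed by $G$ has dimension at least $N-1$.

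The next step is to decompose $\mathbb{R}^d$ as an $S_N$-module. The $Nm$ block-coordinates carry the action $\mathbb{R}^N \otimes \mathbb{R}^m$ (permutation tensored with trivial), while the remaining $d-Nm$ coordinates are fixed. Using $\mathbb{R}^N \cong \mathbf{1} \oplus \mathrm{std}$, where $\mathrm{std}$ is the irreducible $(N-1)$-dimensional standard representation of $S_N$, one obtains
$\mathbb{R}^d \cong \mathbf{1}^{\oplus (d - (N-1)m)} \oplus \mathrm{std}^{\oplus m},$
so exactly two isomorphism classes of irreducibles appear.

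By Maschke's theorem, $E_\lambda$ splits along isotypic components as $E_\lambda = E_\lambda^{\mathbf{1}} \oplus E_\lambda^{\mathrm{std}}$, where $E_\lambda^{\mathbf{1}}$ is precisely the subspace of $G$-fixed (i.e.\ trivial) eigenvectors. The hypothesis that $v$ is non-trivial means $v \notin E_\lambda^{\mathbf{1}}$, so its projection onto $E_\lambda^{\mathrm{std}}$ is nonzero; in particular $E_\lambda^{\mathrm{std}} \neq 0$. As a subrepresentation of $\mathrm{std}^{\oplus m}$ it must be a direct sum of copies of $\mathrm{std}$, and hence has dimension a positive multiple of $N-1$. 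This yields $\dim E_\lambda \geq N - 1$.

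The main potential obstacle is accessibility rather than difficulty: for readers unfamiliar with representation theory, one can replace the isotypic decomposition by an explicit averaging argument. Set $\bar v = |G|^{-1}\sum_{\sigma\in G}\sigma(v)$ and let $u = v - \bar v \in E_\lambda$; non-triviality of $v$ forces $u \neq 0$, $u$ is supported on the block coordinates, and each of its $m$ "columns" $u^{(j)} \in \mathbb{R}^N$ has entries summing to zero. Picking $j$ with $u^{(j)} \neq 0$, the $S_N$-orbit of $u^{(j)}$ spans the full zero-sum subspace of $\mathbb{R}^N$ (dimension $N-1$) either by irreducibility of $\mathrm{std}$ or by an explicit choice of $N-1$ permutations; applying the $G$-equivariant coordinate projection $u \mapsto u^{(j)}$ then lifts this to show $\mathrm{span}\{\sigma(u) : \sigma \in G\} \subseteq E_\lambda$ has dimension at least $N-1$.
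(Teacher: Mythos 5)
Your proof is correct, and it takes a genuinely different route from the one in the paper. The paper argues concretely: it picks a single $\phi$ with $\phi(v)\neq v$, forms $u = v - \phi(v)$ (supported on just two blocks), applies the $N-1$ generators of $\Phi_N^m$ to $u$, and checks linear independence of the resulting vectors coordinate by coordinate. You instead observe that the eigenspace $E_\lambda$ is a $G$-subrepresentation for $G\cong S_N$ acting by block permutations, decompose $\R^d$ as $\mathbf{1}^{\oplus(d-(N-1)m)}\oplus\mathrm{std}^{\oplus m}$, and conclude via the isotypic decomposition that any non-trivial eigenvalue's eigenspace must contain a copy of the $(N-1)$-dimensional standard representation. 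All the ingredients check out: $\Phi_N^m$ does generate the full block-$S_N$ (as the paper's own footnote notes), invariance of $E_\lambda$ under generators extends to the generated group, and $\mathrm{std}$ is absolutely irreducible, so Maschke applies over $\R$. Your approach buys a slightly stronger conclusion for free --- the multiplicity of a non-trivial eigenvalue is a positive \emph{multiple} of $N-1$ --- and it generalizes immediately to other symmetry groups, at the cost of invoking representation theory. Your averaging fallback ($u = v - \bar v$, project onto a nonzero column, use that the $S_N$-orbit of a nonzero zero-sum vector spans the whole zero-sum hyperplane) is essentially a cleaned-up, coordinate-free version of the paper's construction and is also sound; note in passing that the paper's own linear-independence computation contains a typo ($a_j = 1$ should read $a_j = 0$), which your argument sidesteps entirely.
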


\begin{proof}
Since $v$ is non-trivial, there exists $\phi \in \Phi$ such that $\phi(v)_i \neq v_i$ for some $i \in [d]$. Without loss of generality, by reordering the parameters, assume $i = 1$. Since $\phi = \prod_{i=1}^{m} (i,i+mk)$ for some $k \in [N-1]$, we can set $k$ to $N-1$ after reindexing of the $N$ sets. Now $u = v-\phi(v)$ is an eigenvector of $H$ with $u_1 \neq 0$ and zeros everywhere except the first and last $m$ entries, since $k=N-1$ implies that $\phi$ keeps other entries fixed. We claim that
$$ \left\{ \prod_{i=1}^{m} (i,i+mk) u \right\}_{k=0}^{N-2} $$
are $N-1$ linearly independent vectors. Assume there are real numbers $a_0, \ldots, a_{N-2}$ such that
$$\sum_{k=0}^{N-2} a_k \prod_{i=1}^{m} (i,i+mk) u = 0 \,. $$
In particular, noticing that $u_{1+mj} = 0$ for all $1 \leq j \leq N-2$ and considering the $(1+mj)$th entry for each such $j$ yields
$$0 = \sum_{k=0}^{N-2} a_k \left(\prod_{i=1}^{m} (i,i+mk) u\right)_{1+mj} = \sum_{k=0}^{N-2} a_k \del_{jk} u_1 = a_j u_1 \,. $$
This implies $a_j=1$ for all $1 \leq j \leq N-2$. Finally we are left with
$$ 0 = a_0 \prod_{i=1}^{m} (i,i) u = a_0u $$
which implies $a_0 = 0$, so the vectors are linearly independent. By the previous lemma, each vector is an eigenvector with eigenvalue $\la$, so the eigenspace has dimension at least $N-1$ as required.
\end{proof}

\begin{lemma}
There are at most $d-m(N-1)$ linearly independent trivial eigenvectors.
\end{lemma}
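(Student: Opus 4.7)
The plan is to reduce the statement to a linear-algebra fact about the fixed subspace of the group $\Phi = \Phi_N^m$ acting on $\mathbb{R}^d$. By definition, a trivial eigenvector $v$ satisfies $\phi(v) = v$ for every $\phi \in \Phi$, so trivial eigenvectors lie in the subspace $F = \{v \in \mathbb{R}^d : \phi(v) = v \; \forall \phi \in \Phi\}$. Thus it suffices to show $\dim F \leq d - m(N-1)$, since then any family of linearly independent trivial eigenvectors has size at most $\dim F$.

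The computation of $\dim F$ is direct. Each generator $\phi_k = \prod_{i=1}^{m}(i, i + mk)$ for $k \in [N-1]$ is a product of disjoint transpositions that identifies coordinate $i$ with coordinate $i + mk$. So $\phi_k(v) = v$ forces $v_i = v_{i+mk}$ for every $i \in [m]$. Running over all $k \in [N-1]$, the fixed-point conditions amount to
\begin{equation*}
v_i = v_{i+m} = v_{i+2m} = \cdots = v_{i+m(N-1)} \quad \text{for each } i \in [m],
\end{equation*}
while the $d - mN$ coordinates outside the $N$ sets are unconstrained. Each of the $m$ orbits in $\{1, \ldots, mN\}$ contributes one free parameter, so $\dim F = m + (d - mN) = d - m(N-1)$.

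Putting these together, any collection of linearly independent trivial eigenvectors of $H$ sits inside $F$ and therefore has size at most $\dim F = d - m(N-1)$, which proves the lemma. The only mild subtlety is being careful about the reindexing: the statement is about a concrete arrangement of the $N$ sets in the coordinates $1, \ldots, mN$, but since the earlier lemmata already assume this reindexing, no additional work is needed. There is essentially no hard step here — the argument is pure bookkeeping once one identifies trivial eigenvectors with the fixed subspace of the permutation action.
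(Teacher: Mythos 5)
Your proposal is correct and follows essentially the same argument as the paper: both identify trivial eigenvectors with vectors fixed by all $\phi \in \Phi_N^m$, note that the fixed-point conditions force $v_i = v_{i+mk}$ so that $v$ is determined by its first $m$ and last $d-mN$ coordinates, and conclude the bound $m + (d-mN) = d-m(N-1)$. Your phrasing in terms of the dimension of the fixed subspace $F$ is a slightly cleaner packaging of the paper's degrees-of-freedom count, but the substance is identical.
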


\begin{proof}
Assume $v$ is a trivial eigenvector, namely, $\phi(v) = v$ for all $\phi \in \Phi$. Then $v_i = v_{i+mk}$ for all $1 \leq i \leq m$ and $1 \leq k \leq N-1$, so $v$ is fully determined by its first $m$ entries $v_1, \ldots, v_m$ and its last $d-mN$ entries $v_{mN+1}, \ldots, v_d$. This implies that trivial eigenvectors have at most $m + d-mN = d-m(N-1)$ degrees of freedom, so there can be at most $d-m(N-1)$ linearly independent such vectors.
\end{proof}

The theorem now follows easily.

\begin{proof}
Let $k$ and $l$ respectively be the number of distinct trivial and non-trivial eigenvalues. Eigenvectors with distinct eigenvalues are linearly independent, so $k \leq d-m(N-1)$ by the previous lemma. Now assuming for contradiction that $k+l > d-m(N-2)$ implies
$$ d-m(N-2) < k+l \leq d-m(N-1)+l \quad \implies \quad l > m \,.$$
On the other hand, each non-trivial eigenvalue has multiplicity at least $N-1$, giving $k+l(N-1) \leq d$ linearly independent eigenvectors. We obtain the contradiction
$$ d \geq k+l(N-1) = k+l+l(N-2) > d-m(N-2)+l(N-2) > d $$
and conclude that $k+l \leq d-m(N-2)$, as required.
\end{proof}

\subsection{ Maximally Invariant Saddle }
\label{subsection::MIS}

In this section we show that for the case of tabular RL problems, the Maximally Invariant Saddle (MIS) corresponds to the parameter achieving the optimal reward and having the largest entropy. 

We consider $\theta \in \mathbb{R}^{|S|\times |A|}$ the parametrization of a policy $\pi_\theta$ over an MDP with states $S$ and actions $A$. We assume $\theta = \{ \theta_{s} \}_{s \in S}$ with  $\theta_s \in \mathbb{R}^{|A|}$ and (for simplicity) satisfying\footnote{A similar argument follows for a softmax parametrization.} $\sum_{a \in A} \theta_{s,a} = 1$ and $\theta_{s,a} \geq 0$.

Let $\Phi$ denote the set of symmetries over parameter space. In other words, $\phi \in \Phi$ if $\phi$ is a permutation over $|S|\times |A|$ and for all $\theta$ a valid policy parametrization, we have that $J(\theta) = J(\phi(\theta))$ such that $\phi(\theta) = \{ \phi(\theta_s)\}_{s \in S}$ acting per state.

We also assume the MDP is episodic in its state space, meaning the MDP has a horizon length of $H$ and each state $s \in S$ is indexed by a horizon position $h$. No state is visited twice during an episode.

We show the following theorem:

\begin{theorem}\label{theorem::value_invariant_saddle}
Let $\Theta_b$ be the set of parameters that induce policies satisfying $J(\theta) = b$ for all $\theta \in \Theta_b$. Let $\theta^* \in \Theta_b$ be the parameter satisfying $\theta^* = \arg\max_{\theta \in \Theta_b} \sum_{s} H(\pi_\theta (\mathbf{a}| s) )$.  Then for all $\phi \in \Phi$ it follows that $\phi(\theta^*) = \theta_*$.
\end{theorem}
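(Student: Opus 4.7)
The plan is to show (i) that $\phi(\theta^*)$ is also a maximizer of the constrained entropy problem and (ii) that the maximizer is unique, which together force $\phi(\theta^*)=\theta^*$.

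Step (i) is essentially bookkeeping. By definition of $\Phi$, $J(\phi(\theta^*))=J(\theta^*)=b$, so $\phi(\theta^*)\in\Theta_b$. Each $\phi\in\Phi$ acts by permuting coordinates of $\theta$ in a way that respects the product-of-simplices structure (a permutation of actions within each state, possibly composed with a permutation of states), and the Shannon entropy is invariant under relabeling of its argument distribution while the outer sum over states is invariant under relabeling of states. Hence $\sum_s H(\pi_{\phi(\theta^*)}(\mathbf{a}|s))=\sum_s H(\pi_{\theta^*}(\mathbf{a}|s))$, and $\phi(\theta^*)$ attains the same (optimal) entropy.

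For step (ii) I would write the Lagrangian of the constrained problem with a multiplier $\lambda$ for $J(\theta)=b$ and per-state normalization multipliers; interior first-order optimality yields the softmax characterization $\pi_{\theta^*}(a|s)\propto \exp(-\lambda\,\partial_{\theta_{s,a}} J|_{\theta^*})$. Since $J\circ\phi=J$, the chain rule gives $\nabla J(\phi(\theta))=\phi(\nabla J(\theta))$, so $\phi(\theta^*)$ satisfies the same fixed-point equation with the same $\lambda$ (pinned by the common value $b$). To convert this into a uniqueness claim, I would reformulate in terms of the convex polytope of valid state--action occupancy measures $\mu_\theta$, where both the Bellman flow constraints and the value constraint $\langle R,\mu\rangle=b$ are affine; the orbit average $\bar\mu=\frac{1}{|\Phi|}\sum_{\phi\in\Phi}\phi(\mu_{\theta^*})$ is then feasible, and strict concavity of the per-state entropy on each conditional simplex shows that $\bar\mu$ achieves entropy at least that of $\mu_{\theta^*}$, with equality only when $\phi(\mu_{\theta^*})=\mu_{\theta^*}$ for every $\phi\in\Phi$. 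Lifting back to $\theta$-space, the entropy term pins the policy at any unreachable state to the uniform distribution (automatically $\Phi$-invariant), yielding $\phi(\theta^*)=\theta^*$.

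The main obstacle is precisely this uniqueness/symmetrization step. $\Theta_b$ itself is the zero set of the multilinear polynomial $J(\theta)-b$ inside a product of simplices and is generally not convex, so a direct ``strict concavity on a convex feasible set'' argument fails in parameter space. Passing to occupancy-measure space is the natural remedy, but one must carefully verify that $\Phi$ acts linearly on that polytope (so the orbit average is feasible), that the per-state entropy objective remains concave in $\mu$ and strictly so on the conditionals at visited states, and that the lift back to $\theta$-space is consistent with $\Phi$-invariance at unreachable states, where the policy is undetermined by $\mu$ but is forced to uniform by the entropy objective.
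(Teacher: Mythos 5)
Your step (i) is correct and uncontroversial: $\phi(\theta^*)\in\Theta_b$ and the entropy objective is permutation-invariant, so $\phi(\theta^*)$ is also optimal. The whole weight of the theorem therefore rests on your step (ii), and that is where there is a genuine gap. Your symmetrization argument in occupancy-measure space needs the objective $\mu \mapsto \sum_s H\bigl(\mu(s,\cdot)/\textstyle\sum_{a}\mu(s,a)\bigr)$ to be concave, and it is not: the paper's objective is the \emph{unweighted} sum of per-state conditional entropies, which is scale-invariant in each block $\mu(s,\cdot)$ and fails midpoint concavity. Concretely, with two actions the function $f(a,b)=H\bigl(a/(a+b)\bigr)$ satisfies $f(1,1)=\log 2$ and $f(1,M)\to 0$, yet $f(1,(M+1)/2)\to 0$ as well, so $f$ at the midpoint falls far below the average of the endpoint values. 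Consequently the inequality ``entropy of the orbit average $\geq$ average of the entropies'' does not follow; the per-state conditional of $\bar\mu$ is a convex combination of the conditionals of the $\phi(\mu_{\theta^*})$ with \emph{state-dependent, non-uniform} weights $\phi(\mu_{\theta^*})(s)/\sum_{\phi'}\phi'(\mu_{\theta^*})(s)$, and summing the resulting Jensen bounds over $s$ does not recover the unweighted average you need. (The object that \emph{is} concave in $\mu$ is the visitation-weighted conditional entropy $\sum_{s,a}\mu(s,a)\log\bigl(\mu(s)/\mu(s,a)\bigr)$, which is not the paper's objective.) The alternative of averaging the conditionals $\pi(\cdot|s)$ directly restores the entropy inequality but destroys feasibility, since $J$ is only multilinear, not linear, in the policy.

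The paper avoids any global convexity claim by arguing locally and recursively in the episodic MDP. Holding all other states fixed, $J$ is affine in $\pi(\cdot|s)$ (the reach probability of $s$ and the downstream $Q$-values do not depend on $\pi(\cdot|s)$ when no state is revisited), and by testing the symmetry on delta policies one gets $Q_\theta(s,a)=Q_\theta(s,a')$ for actions in the same $\phi$-orbit. Hence redistributing the orbit's probability mass uniformly within the orbit at state $s$ preserves $J$ exactly while strictly increasing $H(\pi(\cdot|s))$ unless the policy was already orbit-uniform; starting from the deepest layer and recursing upward shows the entropy maximizer is uniform over every orbit at every state, which is precisely $\Phi$-invariance. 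If you want to salvage your route, you would need to replace the global orbit average by this kind of per-state, equal-$Q$-orbit symmetrization (or change the objective to the weighted conditional entropy, which proves a different theorem). Your Lagrangian/softmax step, incidentally, does not contribute to uniqueness on the nonconvex set $\Theta_b$ and can be dropped.
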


\begin{proof}
 Let $\theta \in \Theta_b$ and let's assume there is a $\theta' \in \Theta_b$ such that $\phi(\theta') \neq \theta$. We will show there must exist $\theta'' \in \Theta_b$ such that $\sum_{s} H(\pi_{\theta''} (\mathbf{a}| s) ) > \max\left( \sum_{s} H(\pi_\theta (\mathbf{a}| s) ) , \sum_{s} H(\pi_{\theta'} (\mathbf{a}| s) )\right) $. 

Let $s$ be a state such that $\theta_s \neq \theta'_s$ and having maximal horizon position index $h$. In this case, all states $s'$ with a horizon index larger than $h$ satisfy $\theta_{s'} = \phi(\theta_{s'})$. Therefore for any $s'$ having index $h+1$ (if any) it follows that the value function $V_\theta(s') = V_{\theta'}(s')$. Since the symmetries hold over any policy and specifically for delta policies, it must be the case that at state $s$ and for any $a, a' \in A$ such that there is a $\phi' \in \Phi$ with (abusing notation) $\phi'(s,a) \rightarrow s,a'$ it must hold that $Q_\theta(s,a) = Q_\theta(s,a')$. Therefore the whole orbit of $a$ under $\phi'$ for any $\phi' \in A$ has the same $Q$ value under $\theta$ and $\theta'$. Since the entropy is maximized when all the probabilities of these actions are the same, this implies that if $\theta$ does not correspond to a policy acting uniformly over the orbit of $a$ at state $s$ we can increase its entropy by turning it into a policy that acts uniformly over it. Applying this argument recursively down the different layers of the episodic MDP implies that for any $a \in A$, the maximum entropy $\theta \in \Theta_b$ assigns a uniform probability over all the actions on $a'$s orbit. It is now easy to see that such a policy must satisfy $\phi(\theta) = \theta$ for all $\phi \in \Phi$.



\end{proof}

We now show a result relating the entropy regularized gradient-norm objective:

\begin{align*}
    \arg\min_{\theta}  |\nabla_{\theta} J(\theta)|  - \lambda H(\pi_\theta(\mathbf{a})), \lambda > 0
\end{align*}

In this discussion we will consider a softmax parametrization for the policies. Let's start with the following lemma:

\begin{lemma}
Let $p_i(\theta) = \frac{\exp(\theta_i)}{\sum_j \exp(\theta_j))} $ parametrize a policy over $K$ reward values $\{r_i\}_{i=1}^K$. The value function's gradient satisfies:

\begin{equation*}
\left( \nabla \sum_{j =1}^K p_j(\theta) \right)_i = p_\theta(i)(r_i - \bar{r})
\end{equation*}
Where $\bar{r}= \sum_{j=1}^K  p_i(\theta) r_i$.
\end{lemma}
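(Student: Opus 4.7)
The plan is a direct computation: the identity is the standard softmax score-function derivative, so I would unpack the definitions and carry out the partial differentiation componentwise.

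First I would compute $\partial p_j(\theta)/\partial \theta_i$ from the softmax definition. Writing $Z(\theta) = \sum_k \exp(\theta_k)$, the quotient rule gives $\partial p_j/\partial \theta_i = p_j(\delta_{ij} - p_i)$, where $\delta_{ij}$ is the Kronecker delta; the two cases $j=i$ and $j\neq i$ combine cleanly into this single expression. This is the only nontrivial calculation in the proof, and it is essentially mechanical.

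Next I would substitute this into $\nabla_\theta V(\theta)$, where I interpret the value function as $V(\theta) = \sum_{j=1}^K p_j(\theta) r_j$ (the statement as written appears to have a typo — the sum $\sum_j p_j$ is identically $1$, so the intended object must be the reward-weighted sum). Then
\begin{equation*}
\left(\nabla_\theta V(\theta)\right)_i = \sum_{j=1}^K r_j \, p_j(\theta)(\delta_{ij} - p_i(\theta)) = p_i(\theta) r_i - p_i(\theta) \sum_{j=1}^K p_j(\theta) r_j = p_i(\theta)\bigl(r_i - \bar{r}\bigr),
\end{equation*}
using the definition $\bar{r} = \sum_j p_j(\theta) r_j$. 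This yields the claimed formula.

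There is no real obstacle here; the only thing to be careful about is the factoring-out step, which relies on pulling the $i$-independent quantity $\bar{r}$ out of the sum over $j$ after swapping the order of summation. I would also briefly remark on the interpretation (this is the policy-gradient / REINFORCE identity for a single-state bandit with softmax parametrization, where $r_i - \bar{r}$ acts as an advantage), since this gives context for how the lemma will be used in the subsequent MIS argument.
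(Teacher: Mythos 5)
Your proof is correct and follows essentially the same route as the paper's: both are direct computations of the softmax derivative, with the paper carrying out the quotient rule inline (with the $r_j$ weights attached) while you first isolate the identity $\partial p_j/\partial\theta_i = p_j(\delta_{ij}-p_i)$ and then contract with $r_j$. You are also right that the displayed statement has a typo --- the intended object is $\sum_j p_j(\theta)r_j$, as the paper's own computation (which includes the $r_i$ factors throughout) confirms.
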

\begin{proof}
Let $Z(\theta)  = \sum_{j=1}^K \exp(\theta_j)$. The following equalities hold:
\begin{align*}
    \left( \nabla \sum_{j =1}^K p_j(\theta) \right)_i &= \frac{Z(\theta) \exp(\theta_i)r_i - \exp^2(\theta_i) r_i}{Z^2(\theta)} + \sum_{j\neq i} \frac{-\exp(\theta_j) r_j \exp(\theta_i)}{Z^2(\theta)} \\
    &= \frac{Z(\theta)\exp(\theta_i)r_i}{Z^2(\theta)} - \sum_j \exp(\theta_i) \frac{\exp(\theta_j)}{Z^2(\theta)} \\
    &= \frac{\exp(\theta_i)r_i}{Z(\theta)} - \frac{\exp(\theta_i)}{Z(\theta)} \left(\sum_{j} \frac{\exp(\theta_j) r_j}{Z(\theta)} \right) \\
    &= p_i(\theta)\left(r_i - \bar{r}\right).
\end{align*}

The result follows.
\end{proof}

We again consider an episodic MDP with horizon length of $H$ and such that each state $s \in S$ is indexed by a horizon position $h$. No state is visited twice during an episode. Recall the set of symmetries is defined as $\phi \in \Phi$ if for any policy $\pi: \mathcal{S} \rightarrow \Delta_{A}$, with $Q-$function $Q_\pi : \mathcal{S} \times \mathcal{A} \rightarrow \mathbb{R}$, it follows that:

\begin{equation}
    Q_\pi( \phi(s),\phi(a)) = Q_\pi(s,a). 
\end{equation}

We abuse notation and for any policy parameter $\theta \in  \mathbb{R}^{|\mathcal{S}| \times |\mathcal{A}|}$ we denote the parameter vector resulting of the action of a permutation on $\phi$ on the indices of a parameter vector $\theta$ by $\phi(\theta)$.


We show the following theorem:

\begin{theorem}
Let $\Theta_b$ be the set of parameters that induce policies satisfying $\|\nabla J(\theta)\| = b$ for all $\theta \in \Theta_b$. Let $\theta^* \in \Theta_b$ be a parameter satisfying $\theta^* = \arg\max_{\theta \in \Theta_b} \sum_{s} H(\pi_\theta (\mathbf{a}| s) )$ (there could be multiple optima).  Then for all $\phi \in \Phi$ it follows that $\phi(\theta^*) = \theta_*$.
\end{theorem}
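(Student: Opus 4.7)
The plan is to reproduce the structure of the proof of Theorem~\ref{theorem::value_invariant_saddle}, replacing the role of $J(\theta)$ with $\|\nabla_\theta J(\theta)\|$ and leveraging the explicit gradient formula from the preceding lemma. First I would verify the two symmetry ingredients: for any $\phi \in \Phi$, linearity of $\phi$ and the chain rule give $\nabla J(\phi(\theta)) = D\phi\cdot\nabla J(\theta)$, and since $D\phi$ is an orthogonal permutation matrix we have $\|\nabla J(\phi(\theta))\| = \|\nabla J(\theta)\| = b$, so $\Theta_b$ is $\Phi$-closed; meanwhile the summed entropy $\sum_s H(\pi_\theta(\mathbf{a}|s))$ is invariant under any relabeling of state--action pairs that respects state boundaries.

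Next I would proceed by contradiction, via the same backward induction on horizon used in Theorem~\ref{theorem::value_invariant_saddle}. Suppose $\theta^*$ maximizes entropy on $\Theta_b$ but is not $\Phi$-invariant; then there is a state $s$ at the largest horizon $h$ where $\pi_{\theta^*}(\cdot|s)$ fails to be uniform on some $\Phi$-orbit $\mathcal{O}$ of actions. By maximality of $h$, the policy is orbit-uniform at every descendant of $s$, so $V_{\theta^*}$ is orbit-constant at states of horizon $>h$, and consequently $Q_{\theta^*}(s,a)$ is constant on each orbit at $s$. The key local move is the log-sum-exp symmetrization of the logits $\{\theta^*_{s,a}\}_{a\in\mathcal{O}}$: because the $Q$-values are constant on $\mathcal{O}$, this operation preserves the total orbit probability, $V_{\theta^*}(s)$, and the value function at all ancestor states, while strictly increasing the entropy contribution from $\mathcal{O}$ by strict concavity of Shannon entropy. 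The lemma then identifies the squared gradient contribution from $\mathcal{O}$ as proportional to $(q - V_{\theta^*}(s))^2\sum_{a\in\mathcal{O}}p_{\theta^*}(a|s)^2$, which weakly decreases by Cauchy--Schwarz, while contributions from all other states are unchanged.

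The main obstacle is precisely this last observation: the symmetrization in general strictly \emph{decreases} $\|\nabla J\|^2$, pulling the new parameter off the level set $\Theta_b$. I would close the gap by composing the orbit symmetrization with a small norm-restoring perturbation along some direction that does not cancel the first-order entropy gain. Concretely, an implicit-function-theorem argument, using that the gradient-norm constraint cuts $\Theta$ transversally at $\theta^*$ (some component of $\nabla\|\nabla J\|^2$ is nonzero, which should hold generically for positive $b$), produces a smooth curve in $\Theta_b$ along which the entropy is strictly increasing to first order. This contradicts the maximality of $\theta^*$ and forces uniformity on every action orbit at every state, i.e., $\Phi$-invariance. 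The delicate part is establishing this transversality and ensuring that the compensating perturbation direction can be chosen to preserve the strict entropy increase; if transversality fails at $\theta^*$, a separate argument treating the critical-level case would be needed.
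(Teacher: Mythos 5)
Your opening observation --- that $\|\nabla J(\phi(\theta))\| = \|\nabla J(\theta)\|$ because $D\phi$ is an orthogonal permutation matrix, so $\Theta_b$ is $\Phi$-closed --- matches the paper's first step. After that you diverge, and the divergence is where the gap lies. You try to reuse the within-orbit symmetrization that proves the first MIS theorem (the one with constraint $J(\theta)=b$), and you correctly diagnose why it cannot work here: averaging the logits over an action orbit preserves the expected return but strictly decreases that state's contribution $\sum_{a}p_a^2(Q_a-\bar Q)^2$ to $\|\nabla J\|^2$, so the symmetrized point leaves $\Theta_b$. The patch you propose --- an implicit-function-theorem correction back onto the level set --- is not a proof. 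At the constrained entropy maximizer $\theta^*$, no direction tangent to $\Theta_b$ can increase entropy to first order (that is precisely the first-order optimality condition you would be contradicting), so a ``curve in $\Theta_b$ along which the entropy is strictly increasing to first order'' cannot exist at $\theta^*$ and cannot close the argument. What you would actually need is that the finite entropy gain from symmetrization survives the finite norm-restoring correction, which requires quantitative control you do not supply; and you yourself flag that transversality may fail. As written, the proof does not go through.

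The paper avoids this problem entirely by never perturbing within a single state's simplex. It notes that $\theta' = \phi(\theta)$ has the same gradient norm and the same multiset of per-state entropies, and then builds $\theta''$ by a discrete substitution: copy the higher-entropy block $\theta_s$ into the slot $\theta_{\phi(s)}$, so that both symmetric states carry the same local parameters. Because the two states are exchanged by a symmetry of the MDP, the gradient-norm contribution is claimed unchanged while the summed entropy strictly increases, contradicting maximality without ever leaving $\Theta_b$. (The paper's own justification that the norm is preserved under this partial substitution is itself terse, but it is a categorically different, combinatorial move from your analytic one.) To salvage your route you would have to either prove the quantitative comparison between the entropy gained by symmetrization and the entropy cost of returning to the level set, or switch to the paper's block-swap argument.
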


\begin{proof}
Let $\theta \in \Theta_b$ and let's assume there is a $\phi \in \Phi$ such that  $\theta' = \phi(\theta) \neq \theta$.

We will show there must exist $\theta'' \in \Theta_b$ such that $\sum_{s} H(\pi_{\theta''} (\mathbf{a}| s) ) > \max\left( \sum_{s} H(\pi_\theta (\mathbf{a}| s) ) , \sum_{s} H(\pi_{\theta'} (\mathbf{a}| s) )\right) $.

Since we are assuming a softmax parametrization and $\phi$ is a symmetry of the MDP, it must hold that for any two states $s$ and $s'$ with (abusing notation) $s' = \phi(s)$:
\begin{equation*}
    E_{a \sim \pi_\theta}[  Q_{\pi(\theta)}(s, a) ] =  E_{a \sim \pi_{\phi(\theta)}}[  Q_{\pi(\phi(\theta))}(s, a) ] 
\end{equation*}
We conclude that the gradient norm $\|\nabla J(\phi(\theta))\|$ must equal that of $\|\nabla J(\theta)\|$. This implies that if $\phi(s) \neq \phi(s')$ and wlog $H(\pi_\theta(\mathbf{a}| s) ) > H(\pi_\theta(\mathbf{a}|\phi(s)))$ , then we can achieve the same gradient norm but larger entropy by substituting $\theta_{\phi(s)}$ with $\theta_s$. Where $\theta_s$ denotes the $|\mathcal{A}|$-dimensional vector of the policy parametrization for state $s$. The gradient norm would be preserved and the total entropy of the resulting policy would be larger of that achieved by $\theta$ and $\theta'$. This finalizes the proof.

\end{proof}

\end{document}